\renewcommand\bibentry[1]{\nocite{#1}{\frenchspacing\@nameuse{BR@r@#1\@extra@b@citeb}}}
\DeclareRobustCommand\encirclea{\tikz[baseline=(char.base)]{\node[shape=circle,fill,inner sep=0.5pt, minimum size=8pt] (char) {\textcolor{white}{1}}}}
\DeclareRobustCommand\encircleb{\tikz[baseline=(char.base)]{\node[shape=circle,fill,inner sep=0.5pt, minimum size=8pt] (char) {\textcolor{white}{2}}}}
\newtcolorbox{AIbox}[2][]{aibox,title=#2,#1}
\definecolor{lightblue}{rgb}{0.22,0.45,0.70}%
\def\Figref#1{Figure~\ref{#1}}
\def\Secref#1{Sec.~\ref{#1}}
\def\Appref#1{App.~\ref{#1}}
\def\TwoSecrefs#1#2{Secs. \ref{#1} and \ref{#2}}
\def\eqref#1{equation~\ref{#1}}
\def\Eqref#1{Equation~\ref{#1}}
\def\1{\bm{1}}
\DeclareMathAlphabet{\mathsfit}{\encodingdefault}{\sfdefault}{m}{sl}
\SetMathAlphabet{\mathsfit}{bold}{\encodingdefault}{\sfdefault}{bx}{n}
\newcommand{\E}{\mathbb{E}}
\newcommand{\R}{\mathbb{R}}
\newcommand{\gsim}{\raisebox{-0.13cm}{~\shortstack{$>$ \\[-0.07cm] $\sim$}}~}
\newcommand{\bv}{\mathbf{v}}
\newcommand{\by}{\mathbf{y}}
\newcommand{\bx}{\mathbf{x}}
\newcommand{\bw}{\mathbf{w}}
\newcommand{\bs}{\mathbf{s}}
\def\Eqref#1{Eq.~\ref{#1}}
\def\Figref#1{Fig.~\ref{#1}}
\definecolor{blanchedalmond}{rgb}{1.0, 0.92, 0.8}
\definecolor{carmine}{rgb}{0.59, 0.0, 0.09}
\definecolor{lightblue}{rgb}{0.22,0.45,0.70}%
\renewcommand{\mathbf}{\boldsymbol}
\def\Ddots{\mathinner{\mkern1mu\raise\p@
\vbox{\kern7\p@\hbox{.}}\mkern2mu
\raise4\p@\hbox{.}\mkern2mu\raise7\p@\hbox{.}\mkern1mu}}
\newcommand{\abs}[1]{\left| #1 \right|}
\newcommand{\paren}[1]{\left( #1 \right)}
\newcommand{\Rex}{\mathrm{Rex}}
\newcommand{\brck}[1]{\left [ #1 \right ] }
\newcommand{\methodname}{PAV}
\newcommand{\Exp}{\mathbb{E}}
\newcommand{\btheta}{{\mathbf{\theta}}}
\newcommand{\subalign}[1]{%
  \vcenter{%
    \Let@ \restore@math@cr \default@tag
    \baselineskip\fontdimen10 \scriptfont\tw@
    \advance\baselineskip\fontdimen12 \scriptfont\tw@
    \lineskip\thr@@\fontdimen8 \scriptfont\thr@@
    \lineskiplimit\lineskip
    \ialign{\hfil$\m@th\scriptstyle##$&$\m@th\scriptstyle{}##$\hfil\crcr
      #1\crcr
    }%
  }%
}
\newtheorem{theorem}{Theorem}[section]
\newtheorem{lemma}{Lemma}[section]
\newtheorem{proposition}{Proposition}[section]
\newtheorem{remark}{Remark}[section]
\title{Rewarding Progress:  Scaling Automated Process Verifiers for LLM Reasoning}
\author[1,3,*]{Amrith Setlur}
\author[1,*]{Chirag Nagpal}
\author[2]{Adam Fisch}
\author[2]{Xinyang Geng}
\author[2]{Jacob Eisenstein}
\author[2]{Rishabh Agarwal}
\author[1]{\\Alekh Agarwal}
\author[$\dagger$,2]{Jonathan Berant}
\author[$\dagger$,2,3]{Aviral Kumar}
\affil[1]{Google Research}
\affil[2]{Google DeepMind}
\affil[3]{Carnegie Mellon University}
\affil[*]{Equal contribution}
\affil[$\dagger$]{Equal advising}
\begin{abstract}
A promising approach for improving reasoning in large language models is to use process reward models (PRMs). PRMs provide feedback at each step of a multi-step reasoning trace, potentially improving credit assignment over outcome reward models (ORMs) that only provide feedback at the final step. However, collecting dense, per-step human labels is not scalable, and training PRMs from automatically-labeled data has thus far led to limited gains. To improve a \emph{base} policy by running search against a PRM or using it as dense rewards for reinforcement learning (RL), we ask: ``How should we design process rewards?''. Our key insight is that, to be effective, the process reward for a step should measure \emph{progress}: a change in the likelihood of producing a correct response in the future, before and after taking the step, corresponding to the notion of step-level advantages in RL. Crucially, this progress should be measured under a \emph{prover} policy distinct from the base policy. We theoretically characterize the set of good provers and our results show that optimizing process rewards from such provers improves exploration during test-time search and online RL. In fact, our characterization shows that weak prover policies can substantially improve a stronger base policy, which we also observe empirically. We validate our claims by training \emph{process advantage verifiers (PAVs)} to predict progress under such provers, and show that compared to ORMs, test-time search against PAVs is $>8\%$ more accurate, and  $1.5-5\times$ more compute-efficient. Online RL with dense rewards from PAVs enables \emph{one of the first results} with  $5-6\times$ gain in sample efficiency, and $>6\%$ gain in accuracy, over ORMs.
\end{abstract}
\begin{document}

\maketitle

\vspace{-0.2cm}
\section{Introduction}
\label{sec:introduction}
\vspace{-0.2cm}

Trained reward models or \emph{verifiers} are often used to improve math reasoning in large language models, either by re-ranking solutions at test-time~\citep{collins2000discriminative} or via reinforcement learning (RL)~\citep{uesato2022solving}. Typically, verifiers are  trained to predict the outcome of an entire reasoning trace, often referred to as \emph{outcome} reward models (ORM)~\citep{cobbe2021training,hosseini2024v}. However, ORMs only provide a sparse signal of correctness, which can be hard to learn from and inefficient to search against. 
This challenge is alleviated by fine-grained supervision, in theory. For reasoning, prior works train \emph{process} reward models (PRMs) that assign intermediate rewards after each step of search~\citep{snell2024scaling} or during RL. While \citet{lightman2023let} obtains PRM annotations from human raters, this approach is not scalable. More recent works~\citep{wang2024mathshepherd,luo2024improve} train PRMs to predict automatically-generated annotations that estimate future success of solving the problem, akin to value functions in RL. 
So far, automated PRMs, especially as dense rewards in RL, only improve by 1-2\% over ORMs~\citep{shao2024deepseekmath}, raising serious doubts over their utility.

To resolve these uncertainties, in this paper, we train  PRMs with automated annotations, such that optimizing the dense rewards from trained PRMs can improve a \emph{base} policy compute- and sample-efficiently, during test-time search and online RL. 
For this, we first ask: \textbf{(i)} what should the \textit{per-step} process rewards measure, and \textbf{(ii)} what kind of automated data collection strategy should we use to train PRMs that predict this measure. For \textbf{(i)}, conventional belief~\citep{lightman2023let, uesato2022solving} has been to measure mathematical correctness or relevance of steps. 
But, it is unclear if this supervision yields the most improvement in the base policy (\textit{e.g.}, a policy may need to generate simpler, repetitive, and even incorrect steps to explore and discover the final answer during test-time search and RL). 
\textbf{Our key insight} is that per-step, process rewards that measure a notion of \emph{progress}: change in the likelihood of arriving at a correct final answer before and after taking the step, are  effective, for both test-time beam search and online RL. Reinforcing steps that make progress regardless of whether they appear in a correct or incorrect trace diversifies the \textbf{\emph{exploration}} of possible answers at initial steps, which is crucial when the approach to solve a problem is not clear. Formally, such  rewards correspond to per-step \emph{advantages} of steps from the RL
literature~\citep{suttonrlbook}. We empirically show that using advantages in addition to ORM rewards outperforms the typical use of future probabilities of success or $Q$-values~\citep{wang2024mathshepherd} for both search and RL. This is because, when given a combinatorial space of responses, under bounded computational and sampling constraints, $Q$-values mainly ``exploit'' states whereas advantages also ``explore'' steps that make the most progress towards the final answer (\Figref{fig:illustration_panel}).

\begin{figure}[t]
    \centering
    \includegraphics[width=0.8\linewidth]{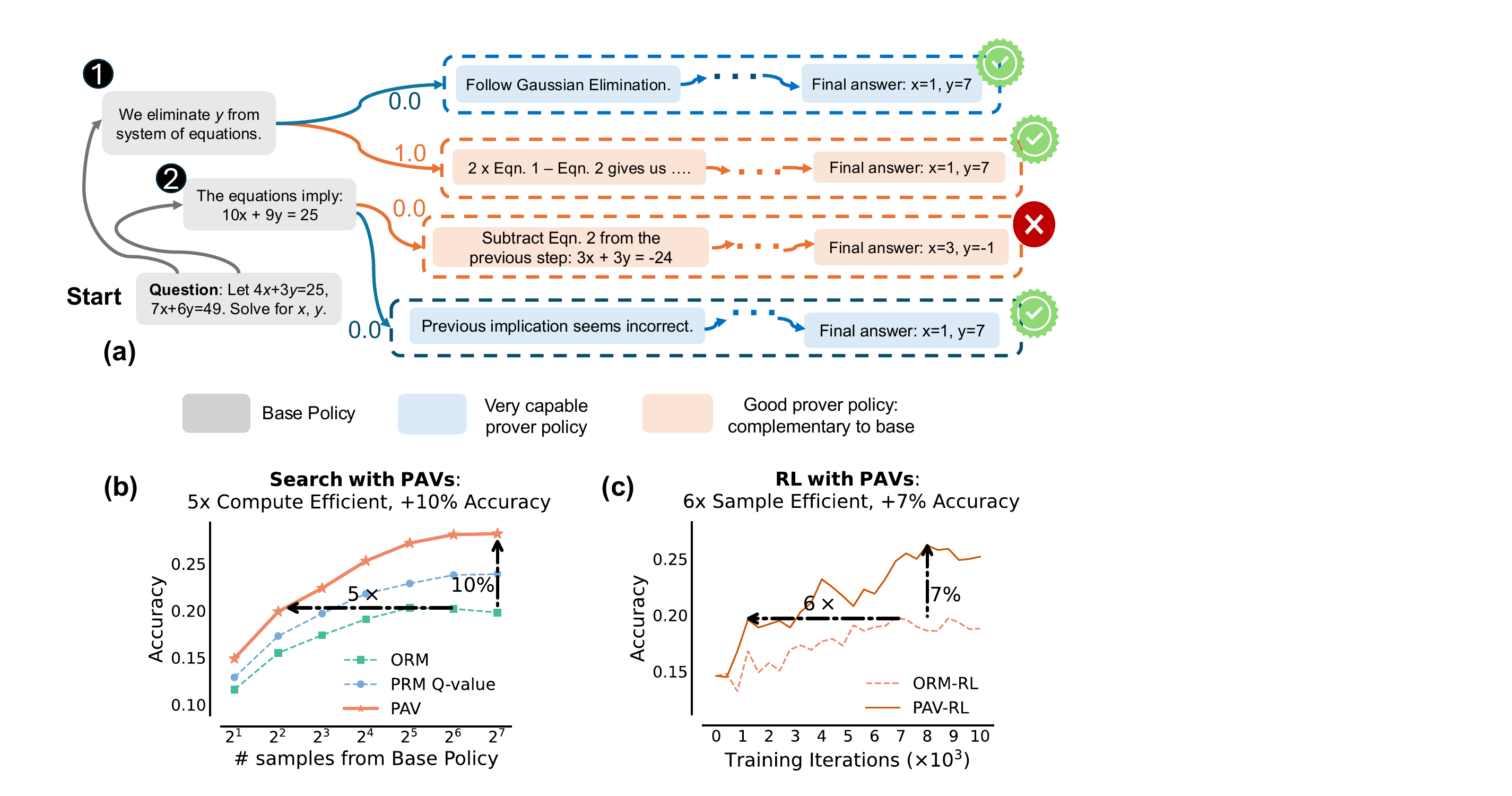}
    \vspace{-0.2cm}
    \caption{
    \textbf{\emph{Process advantage verifiers (PAV):}} 
    Process reward for a step is defined as progress (advantage) under the prover policy,  \textit{i.e.}, change in prover policy's success rate before and after the step.
    \textbf{(a):} The base policy samples both correct \encirclea ~and incorrect \encircleb ~steps but struggles to succeed from either. A strong prover policy completes the solution from both steps, and is unable to adequately reflect progress made by \encirclea~and  \encircleb~(both scored 0.0). Conversely, a complementary prover policy distinguishes \encirclea, \encircleb~ more prominently (only succeeds from \encirclea). \textbf{(b,c):} Compared to ORMs, PAVs are 5x more compute efficient, 10\% more accurate in test-time search, and 6x more sample efficient, 7\% more accurate for online reinforcement learning (RL).
}
    \vspace{-0.2cm}
    \label{fig:intro-figure}
\end{figure}

To answer \textbf{(ii)}, we first note that  advantages under a poor base policy are $\approx 0$ on most steps, and thus will not be informative for search or RL. In addition, regardless of the strength of the base policy, using its own per-step advantages as process rewards in RL will result in base policy updates equivalent to \emph{only} using outcome rewards for RL (since a standard policy gradient algorithm already computes advantages). 
Hence, we propose to use advantages estimated via rollouts under a different \textbf{\emph{prover policy}} as process rewards (\Figref{fig:intro-figure}(a)). How should we choose this prover policy? A natural guess would be to use a very capable prover. However, we show advantages under an overly capable prover policy, that can succeed from any step, fail to distinguish good and bad steps. A similar argument holds for very weak provers. 

In theory, we formalize this intuition to define good provers as policies that are \emph{complementary} to the base policy (\textit{i.e.}, policies with advantages that can contrast steps produced by the base policy sufficiently), while still producing step-level advantages correlated with those of the base policy. For \textit{e.g.}, for Best-of-$K$ policies~\citep{nakano2021webgpt} corresponding to a base policy, we empirically find that provers corresponding to $K>1$ (but not too large) are more capable at improving the base policy. Contrary to intuition,  the set of complementary provers also contains policies that are worse than the base policy. 
To predict the advantages of such provers we train dense verifiers, called \emph{\textbf{process advantage verifiers (PAVs)}}, 
that accelerate sample and compute efficiency of RL and search.

With the conceptual design of PAVs in place, we prescribe practical workflows for training PAVs and demonstrate their efficacy on a series of 2B, 9B, and 27B Gemma2 models~\citep{team2024gemma}. 
PAV training data is gathered by sampling ``seed'' solution traces from the prover and partial rollouts from the same to estimate the $Q$-value at each prefix of the seed trace. Our workflow prescribes favorable ratios for seed and partial rollouts.
Our first set of empirical results show that for an equal budget on test-time compute, beam search against trained PAVs is >8\% better in accuracy, and \mathbf{$1.5-5\times$} more compute efficient compared to re-ranking complete traces against an ORM (\Figref{fig:intro-figure}(b)). Dense rewards from PAVs improve the efficiency of step-level exploration during search by pruning the combinatorial space of solutions aggressively and honing in on a diverse set of possible sequences. Finally, we demonstrate \textbf{\emph{for the first time}}, that using PAVs as dense rewards in RL scales up data efficiency by \mathbf{$6\times$} compared to only using outcome rewards (\Figref{fig:intro-figure}(c)). Moreover, base policies trained with PAVs also achieve \mathbf{$8\times$} better Pass @$N$ performance (probability of sampling the correct solution in $N$ attempts), and consequently afford a higher ceiling on  the performance of any test-time re-ranker. Finally, running RL with PAVs discovers solutions to hard problems that sampling from the SFT policy with a very large budget can't solve.

\vspace{-0.2cm}
\section{Preliminaries, Definitions, and Notation}
\label{sec:prelim}
\vspace{-0.2cm}

Following protocols from \citet{uesato2022solving,lightman2023let}, a reasoning trace from an LLM consists of multiple logical steps separated by a demarcation token. 
An outcome reward model (ORM) is a trained verifier that assigns a  numerical score after the last step of the trace, and a process reward model (PRM) is a  trained verifier that scores each step of the trace individually. 

\textbf{Problem setup and notation.} Given a math problem $\bx \in \mathcal{X}$, our goal is to improve a \emph{base policy} $\pi$ that samples a response $\by \sim \pi(\cdot \mid \bx)$ in the set $\mathcal{Y}$. 
A response $\by$ consists of multiple reasoning steps (maximum $H$), separated by a delimiter (`next line' in our case), \textit{i.e.}, $\by = (a_1, a_2, \ldots, a_H)$. 
Since sampling is auto-regressive, we can view each step as an action taken by the agent $\pi$ in a Markov decision process (MDP) with deterministic dynamics. Specifically, we treat  the prefix $(\bx, a_1, \ldots, a_{h-1})$ 
as the current \emph{state} $\bs_h$ and next step $a_{h} \sim \pi(\cdot \mid \bx)$ as the \emph{action} taken by $\pi$ at $\bs_h$, resulting in the next state $\bs_{h+1}$. For problem $\bx$, with ground-truth response 
$\by^{\star}_{\bx}$, we can evaluate the accuracy of $\pi$ by running a regular expression match on the final answer~\citep{hendrycksmath2021}: $\Rex (\by, \by^{\star}_{\bx})  \mapsto \{0,1\}$, \textit{i.e.}, accuracy is given by $\mathbb{E}_{\by \sim \pi(\cdot \mid \bx)} \left[\Rex(\by, \by^{\star}_{\bx})\right]$. Now, given a dataset $\mathcal{D}=\{(\bx_i, \by_{\bx_i}^\star)\}_{i}$ of problem-solution pairs, the main goal is to learn a good base policy by optimizing this outcome reward on $\mathcal{D}$. Next, we see how we can leverage the final answer verifier $\Rex$  available on  $\mathcal{D}$ to train ORMs and PRMs.

\textbf{Outcome reward model (ORM).} 
Given a response $\by$, an ORM estimates the ground-truth correctness $\Rex(\by, \by^{\star}_{\bx})$. To train such a model we first take problems in $\mathcal{D}$, and collect training data of the form $\{(\bx, \by \sim \pi(\cdot \mid \bx), \Rex(\by, \by^{\star}_{\bx}))\}$. Then we train an ORM that takes as input a problem-response pair $(\bx, \by)$ and predicts $\Rex(\by, \by^{\star}_{\bx})$. At test time, when $\by^{\star}_{\bx}$ is unknown, the ORM is used to score candidate solutions revealed by test-time search.
 Given a base policy $\pi$, a {Best-of-$K$} policy: $\mathrm{BoK}(\pi)$, is a policy that samples $K$ responses from $\pi$, scores them against an ORM, and returns the one with the highest score. Whenever the ORM matches $\Rex$, the performance of $\mathrm{BoK}(\pi)$ is referred to as Pass @$K$.
Furthermore, when the likelihood of $\pi$ solving problem $\bx$ is $p_{\bx}$, then for $\mathrm{BoK}(\pi)$ this likelihood is given by the expression: $1-(1-p_{\bx})^K$. In general, this is larger than $p_{\bx}$, making  $\mathrm{BoK}(\pi)$  stronger than $\pi$ for $K>1$.

\textbf{Standard process reward models (PRMs).} A PRM scores every step $a_h$ in a  multi-step response $\by \sim \pi$ (\textit{e.g.}, in \cite{lightman2023let} PRMs are trained to score correct steps over  incorrect and irrelevant ones).
But, unlike ORMs, which only require $\Rex$ for data collection, PRM training data requires expensive step-level human annotations. Prior works~\citep{wang2024mathshepherd,luo2024improve} attempted to scale process rewards automatically by sampling from the model to provide a heuristic understanding of when a step is actually correct. 
In particular, they evaluate a prefix by computing the expected future accuracy of multiple completions sampled from $\pi$, after conditioning on the prefix, \textit{i.e.}, value function $Q^\pi$  (\Eqref{eq:value_function}) from RL. 
Similarly, we define  $V^\pi(\bs_h) \coloneqq \Exp_{a_h\sim \pi(\cdot \mid \bs_h)} Q^\pi(\bs_h, a_h)$  
as value of state $\bs_h$. These works use $Q^\pi$ as the PRM that assigns a score of $Q^\pi(\bs_h,a_{h})$ to the action $a_h$, at state $\bs_h$.  
{
\setlength{\abovedisplayskip}{12pt}
\setlength{\belowdisplayskip}{8pt}
\begin{align}
    \label{eq:value_function}
    Q^{\pi}(\underbrace{(\bx, a_1, \ldots , a_{h-1})}_{\text{state}~ \bs_{h}}, \underbrace{a_{h}}_{\text{action } a_{h}} ) =\underbrace{
    {\Exp}_{
    {a_{h+1}, \ldots, a_H \sim {\pi}(\cdot| \bs_{h}, a_h)}
    } \Big[ \Rex\left( (a_1, \ldots, a_H), \by^{\star}_{\bx}\right)\Big]}_{\text{likelihood of future  success}},
\end{align}
}%

\textbf{Using PRMs for beam search at test-time.} Given a PRM, a natural way to spend test-time compute is to use it as a step-level re-ranker within a beam search procedure~\citep{snell2024scaling}. For each problem,  at step $0$, a beam of maximum width $B$, is initialized with a single state consisting of just the problem.
At step $h$, a beam contains partial responses unrolled  till a set of states or prefixes $\{\bs_{i}\}_{i=1}^B$. From each state $\bs_{i}$ in this set, $C$ independent actions or steps $\{a_{i, j}\}_{j=1}^C$ are sampled from $\pi(\cdot \mid \bs_{i})$, each of which leads to a new state.  Process rewards from PRMs assign a score to every new state $(\bs_{i}, a_{i,j})$, and only the states corresponding to the top $B$ values are retained in the beam for the next step. 

\vspace{-0.2cm}
\section{How Should we Define Process Rewards and Why?}
\label{sec:value_functions}
\vspace{-0.2cm}

Ultimately, we are interested in test-time search and RL methods that can most efficiently and reliably discover solution traces with the correct final answer, thus maximizing  $\Rex$. To this end, \emph{process rewards should serve as step-level supervision to indirectly maximize outcome-level $\Rex$}. Our position contrasts with conventional belief that process rewards should mainly evaluate mathematical correctness or relevance of individual       steps~\citep{lightman2023let,uesato2022solving}, since LLMs might need to generate trivial or repetitive intermediate steps in order to discover a trace with the correct final answer.  
With this insight, in this section we approach the design of dense automated step rewards as a form of supervision to be used in conjunction with sparse outcome rewards to improve the base policy.  \looseness=-1

In an MDP, a starting point to design step-level dense feedback that is eventually meant to optimize a sparse outcome reward $\Rex$ is to consider the notion of a \emph{potential function}~\citep{ng1999policy}: in our case, this is a function that summarizes the difference between some statistic of the policy at the future state and the same statistic computed at the current state.
By appealing to this framework, in \Secref{subsec:adv_not_value}, we show that advantages -- not value functions~\citep{wang2024mathshepherd,luo2024improve} -- that measure a notion of ``progress'' at each new step are more appropriate for use as dense rewards in search and RL (primarily for exploration). Then in \TwoSecrefs{subsec:stylized_problem} {subsec:theory}, we show that this progress or advantage vakue is measured best under a policy $\mu$, different from the base policy $\pi$. We call this policy $\mu$, the \textbf{\emph{prover policy}}.

 \begin{figure}
    \centering
    \includegraphics[width=0.99\linewidth]{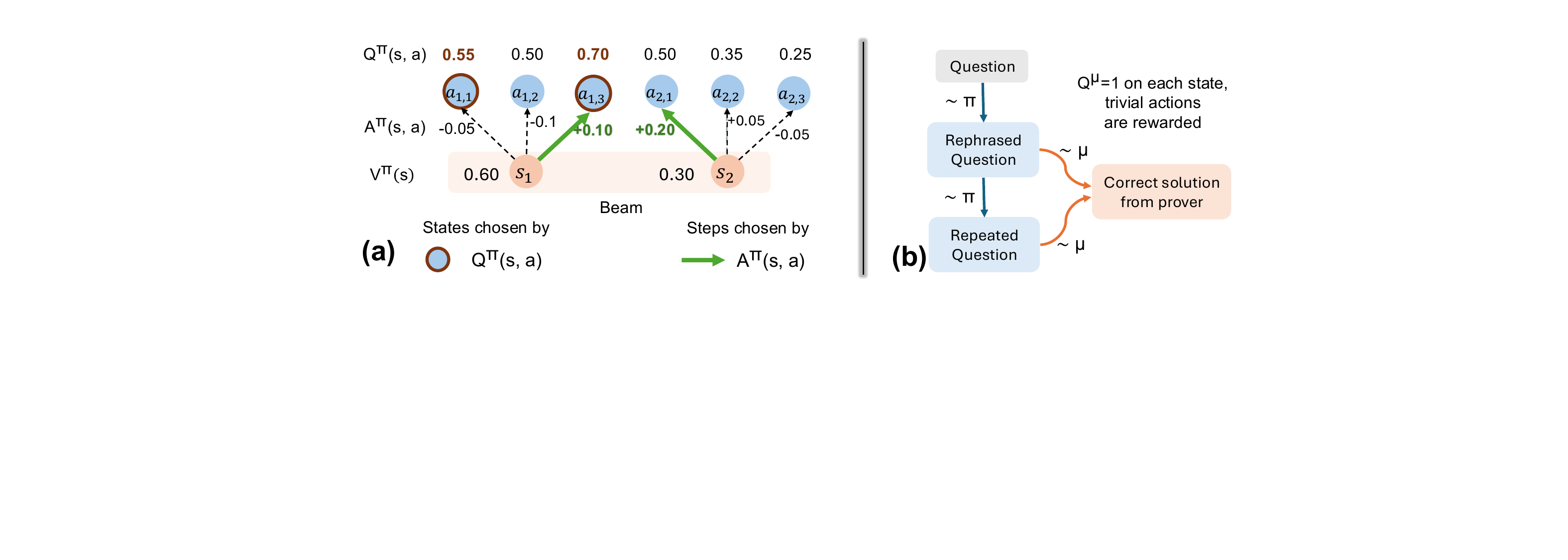}
    \caption{\textbf{\textit{Issues with using $Q$-values as process rewards}:} \textbf{(a):}   Unlike $A^\pi$, $Q^\pi$ mixes action evaluation with the $Q$-value of the previous state. Beam search with $Q^\pi$ exploits high-likelihood states, while adding $A^\pi$ (\textit{e.g.}, $Q^\pi + \alpha A^\pi$ in \Eqref{eq:prm_rl_grad}) aids in exploring states reached by making actions that induce progress, \textit{i.e.}, increase likelihood of success. 
    \textbf{(b):} $Q^\mu$ from a strong prover $\mu$ can assign unmerited bonuses to trivial actions.}
    \vspace{-0.2cm}
    \label{fig:illustration_panel}
\end{figure}

\vspace{-0.2cm}
\subsection{Process Rewards Should be Advantages, Not Value Functions}
\label{subsec:adv_not_value}
\vspace{-0.2cm}
To understand the relationship to potential functions, we first study test-time beam search, and present some challenges with the reward design of \citet{snell2024scaling}, that uses value function $Q^\pi(\bs, a)$ of the base policy $\pi$ 
to reward action $a$ at state $\bs$. 
Consider the example in \Figref{fig:illustration_panel}(a), where
from the $2$ states in the beam, we sample $3$ actions. If we pick next states purely based on highest values of $Q^\pi$, we would be comparing steps sampled from different states (\textit{e.g.}, $a_{1,1}$ vs. $a_{2,1}$) against each other. Clearly, a reduction in expected final outcome, \textit{i.e.},  $Q^\pi(\bs_1, a_{1,1}) - V^\pi(\bs_1)$, means that $a_{1, 1}$ \emph{by itself} has a negative effect of $-0.05$ on the probability of success from $\bs_1$, whereas $a_{2,1}$ has a positive effect of $+0.20$ from $\bs_2$. However, expanding the beam based on \emph{absolute} values of $Q^\pi$ retains the action that makes negative progress, and removes state $\bs_2$ from the beam (as beam size is 2). In other words, $Q^\pi$ fails to decouple the ``evaluation'' of an action (step), from the ``promise'' shown by the previous state. This will not be an issue for every problem, and particularly not when the beam capacity is unbounded, but under finite computational and sampling constraints, using $Q^\pi$ might retain states with potentially unfavorable steps that hurt the overall likelihood of success. \textbf{{If we could also also utilize the \emph{progress} made by the previous step}} along with the likelihood of success $Q^\pi$ when deciding what to retain in the beam, then we can address this tradeoff. 

\emph{\textbf{How can we measure the ``progress'' made by a step?}}  One approach is to consider the relative increase/decrease in the likelihood of success, before and after the step. This notion is formalized by the advantage (\Eqref{eq:advantage})  of a step under policy $\pi$. Furthermore, since advantages can attach either positive or negative values to a step, training the base policy against advantages supervises the base policy when it generates a step that makes progress (where $A^\pi > 0$), and also when it fails to produce one, employing a ``negative gradient'' that speeds up  RL training~\citep{tajwar2024preference}.
{
\setlength{\abovedisplayskip}{10pt}
\setlength{\belowdisplayskip}{10pt}
\begin{align}
    \label{eq:advantage}
    A_{}^\pi(\bs_{h}, a_{h}) \coloneqq Q_{}^\pi(\bs_{h}, a_{h}) - V_{}^\pi(\bs_{h}) = Q_{}^\pi(\bs_{h}, a_{h})  - Q_{}^\pi(\bs_{h-1}, a_{h-1}).
\end{align}
}Recall that since we view process rewards as potential functions in the MDP, they can be computed under any policy $\mu$, which can be the base policy. However, in the above example, reasons for which $Q^\pi$ is a seemingly unfit choice for process rewards also apply to $Q^\mu$. Nevertheless, we can possibly use advantage under $\mu$: $A^\mu$, which measures the progress made by a step to improve the likelihood of success under $\mu$. 
In that case, how should we choose this policy $\mu$, that we call the prover policy, and should it be necessarily different from base policy $\pi$?
Before diving into the choice of $\mu$, we discuss a more pertinent question: how should we use $A^\mu$ in conjunction with outcome rewards for improving the base policy $\pi$? We will then formally reason about the choice of $\mu$ in \TwoSecrefs{subsec:stylized_problem} {subsec:theory}.

\vspace{-0.2cm}
\subsection{Our Approach: Process Advantage Verifiers (\methodname)}
\label{subsec:method}
\vspace{-0.2cm}
For building an approach that uses process rewards $A^\mu$ together with the outcome reward $\Rex$ to improve the base policy $\pi$, we situate ourselves in the context of improving $\pi$ with online RL. If all we had was access to $\Rex$ on $\mathcal{D}$, the standard RL objective is given by:
{\setlength{\abovedisplayskip}{10pt}
\setlength{\belowdisplayskip}{10pt}
\begin{align}
    \label{eq:standard_rl}
    \ell_\mathrm{ORM-RL}(\pi) &:= \Exp_{\bx \sim \mathcal{D}, (a_1, \ldots, a_H) \sim \pi(\cdot|\bx)} \left[ \mathrm{Rex}\left( (\bx, a_1, \ldots, a_H), \by^{\star}_{\bx}\right) \right]. 
\end{align}}Inspired by how reward bonuses (and potential functions) are additive~\citep{ng1999policy,bellemare2016unifying}, one way to use process rewards $A^\mu$ is to combine it with the standard RL objective as:
{
\setlength{\abovedisplayskip}{10pt}
\setlength{\belowdisplayskip}{10pt}
\begin{align}
    \label{eq:prm_rl}
    \ell^{\pi'}_\mathrm{PAV-RL}(\pi) := \ell_\mathrm{ORM-RL}(\pi) + \alpha \cdot \textcolor{red}{\sum_{h=1}^H  
    \Exp_{\bs_h\sim d^{\pi'}_h} \Exp_{a_h \sim \pi(\cdot \mid \bs_h)} \left[ A^\mu (s_h, a_h) \right]} 
    \end{align}
}The term in red is the difference in likelihoods of success of the prover $\mu$, summed over consecutive steps (a notion of \emph{\textbf{progress}}). Here, $d^{\pi'}_h$ denotes the distribution over states at step $h$, visited by the old policy $\pi'$ (policy at previous iterate). 
Following policy gradient  derivations~\citep{williams1992reinforce}:
{
\setlength{\fboxrule}{0.75pt}
\setlength{\abovedisplayskip}{12pt}
\setlength{\belowdisplayskip}{12pt}
\begin{align}
     \label{eq:prm_rl_grad}
    \boxed{\nabla_\pi  \ell^{\pi'}_{\mathrm{PAV-RL}} (\pi) \Big\vert_{\pi' = \pi} = \sum_{h=1}^H \; \nabla_\pi  \log \pi(a_h \mid \bs_h) \cdot \underbracket{\paren{Q^\pi(\bs_h, a_h) + \alpha \cdot \textcolor{red}{A^\mu(\bs_h, a_h)}}}_{\text{effective reward}}}
\end{align}
}At a glance, we can view $Q^\pi(\bs_h, a_h) + \alpha A^\mu(\bs_h, a_h)$ as the effective reward for step $a_h$ when scored against a combination of the outcome evaluation $\Rex$, i.e., $Q^\pi$, and process rewards $A^\mu$.
Thus, we can optimize ~\Eqref{eq:prm_rl} indirectly via \textbf{(a)} running beam-search against the effective reward; or \textbf{(b)} online RL where the policy gradients are given by \Eqref{eq:prm_rl_grad}. For either of these, we need access to verifiers that are trained to predict the advantage $A^\mu(\bs_h, a_h)$ under the prover. We refer to these verifiers as \emph{\textbf{process advantage verifiers (PAVs)}}. In \Secref{subsec:dataset} we describe how to train PAVs, but now we use the above formulation to reason about how to choose prover $\mu$ that is most effective at improving base $\pi$.  

We also remark that the term in red resembles prior work on imitation learning via  policy optimization~\citep{ross2014reinforcement,sun2017deeply}, where the main aim is to learn a policy $\pi$ that imitates the prover $\mu$, or to improve upon it to some extent. Of course, this is limiting since our goal is to not just take actions that perform at a similar level as $\mu$, but to improve the base policy even further, and using a combination of $Q^\pi$ and $A^\mu$ is critical towards this goal.

\textbf{How should we choose the prover $\mu$?} Perhaps a natural starting point is to set the prover to be identical to the base policy, \textit{i.e.}, $\mu = \pi$, which produces process rewards that prior works have considered~\cite{shao2024deepseekmath}.   
However, setting $A^\pi = A^\mu$ in \Eqref{eq:prm_rl_grad} results in exactly the same policy gradient update as only optimizing outcome evaluation $\Rex$.
Moreover, for a poor base policy $\pi$, where $Q^\pi \approx 0$ on most states, the term $A^\pi$ would also be $\approx 0$, and hence running beam search with the effective rewards would not be informative at all. 
Hence, \emph{\textbf{a better approach is to use a different prover policy}}, but a very weak prover $\mu$ will likely run into similar issues as a poor base policy. We could instead use a very capable prover $\mu$, but unfortunately even this may not be any better than optimizing  only the outcome reward either. To see why, consider a scenario where $\pi$'s response contains an intermediate step that does not help make progress towards the solution (\textit{e.g.}, $\pi$ simply restates the question, see \Figref{fig:illustration_panel}(b)). Here, $Q^\mu$ for a capable prover before and after this irrelevant step will be identical since $\mu$ can succeed from either step.
This means that $\mu$ fails to distinguish steps, resulting in $A^\mu\approx 0$ in most cases. 
Training with this process reward during RL will then lead to gradients that are equivalent to those observed when purely optimizing $\ell_{\mathrm{ORM-RL}}$. 
In fact, empirically, we observe that online RL with $Q^\mu$ from strong provers leads to polices that only produce re-phrasings of the question (\Appref{sec:q_value_failure}) and do not succeed at solving the question. 
Clearly, \emph{any} policy different from the base policy cannot serve as a prover. So, how do we identify a set of good provers? Can they indeed be weaker than the base policy?  We answer next.

\begin{AIbox}{Takeaway: What should process rewards measure during test-time search and online RL?}
\begin{itemize}[leftmargin=0em]
    \setlength\itemsep{0em}
    \item Process rewards should correspond to progress, or \textbf{advantage}, as opposed to absolute $Q$-values, for a better explore-exploit tradeoff during beam search and online RL. 
    \item Advantages should be computed using a \textbf{prover} policy, different from the base policy. 
\end{itemize}
\end{AIbox}

\vspace{-0.2cm}
\subsection{Analysis in a Didactic Setting: Learning a Planted Sub-sequence}
\label{subsec:stylized_problem}
\vspace{-0.2cm}
In this section, we aim to characterize prover policies that are effective in improving the base policy. To do so, we first introduce a didactic example, representative of real reasoning scenarios to illustrate the main intuition. Then, we will formalize these intuitions in the form of theoretical results.

\textbf{Didactic example setup.}  Given an unknown sub-sequence $\by^\star$ consisting of tokens from vocabulary $\mathcal{V} \coloneqq \{1,2, \ldots, 15\}$, we train a policy $\pi$ to produce a response which contains this sub-sequence. The task completion reward is terminal and sparse, \textit{i.e.}, $r(\by, \by^\star) = 1$ for a $\by$ if and only if $\by^\star$ appears in $\by$. By design, the reward $r(\by, \by^\star)$ resembles outcome reward $\Rex(\by, \by^\star_{\bx})$ in \Secref{sec:prelim}. The prover policy $\mu$ is a procedural policy, parameterized by a scalar $\gamma > 0$ (details in \Appref{app:toy_additional}).  As $\gamma$ increases, the performance of $\mu$ improves and  $\rightarrow 1$ as $\gamma \rightarrow \infty$. For simplicity, we assume oracle access to ground-truth $A^\mu$ and $Q^\pi$, and alleviate errors from learned verifiers approximating these values. 

\textbf{\textbf{(1)} RL with effective reward $Q^\pi + \alpha A^\mu$ is $10\times$ more sample-efficient than only outcome reward.} In \Figref{fig:didactic_panel}(a), we first note that training $\pi$ with this  effective reward  under a prover $\mu$ with strength $\gamma=10$, produces optimal performance (100\% accuracy) in 350 iterations, despite starting from a mediocre initialization for $\pi$ $(\gamma=5.0)$. Training with only outcome reward is ineffective. More importantly, in \Figref{fig:didactic_panel}(b), we note that effective rewards only help for a set of provers, in $\gamma \in [8.0, 15.0]$. Outside this range, we observed advantages $A^\mu$ were close to $0$ on most states, either because $\mu$ was poor (small $\gamma$) and was unable to generate $\by^\star$ even when $\pi$ got the sequence partially correct, or because $\mu$ was strong (large $\gamma$) that it generated $\by^*$ with almost equal likelihood from all prefixes. 

\textbf{\textbf{(2)} Effective reward improves Pass @N by $5\times$ over only outcome reward.}
 We report the ``Pass @N'' performance in \Figref{fig:didactic_panel}(c), which measures the maximum reward $r$ across $N$ traces sampled \textit{i.i.d.} from $\pi$ and hence, represents the ceiling on the performance of any test-time search method that picks a single response from multiple draws (\textit{e.g.}, as in Best-of-N).
For a policy trained  with the effective reward for 100 iterations, the Pass @N performance grows $5\times$ faster with $N$, compared to the policy trained with only the outcome reward. Due to only sparse feedback, the latter policy does not learn to sample partially correct $\by^\star$, whereas a policy trained with the effective reward produces partially correct $\by^\star$, and is able to sample the complete $\by^\star$ with higher likelihood during Pass @N.

\begin{figure}
    \centering
    \includegraphics[width=0.95\linewidth]{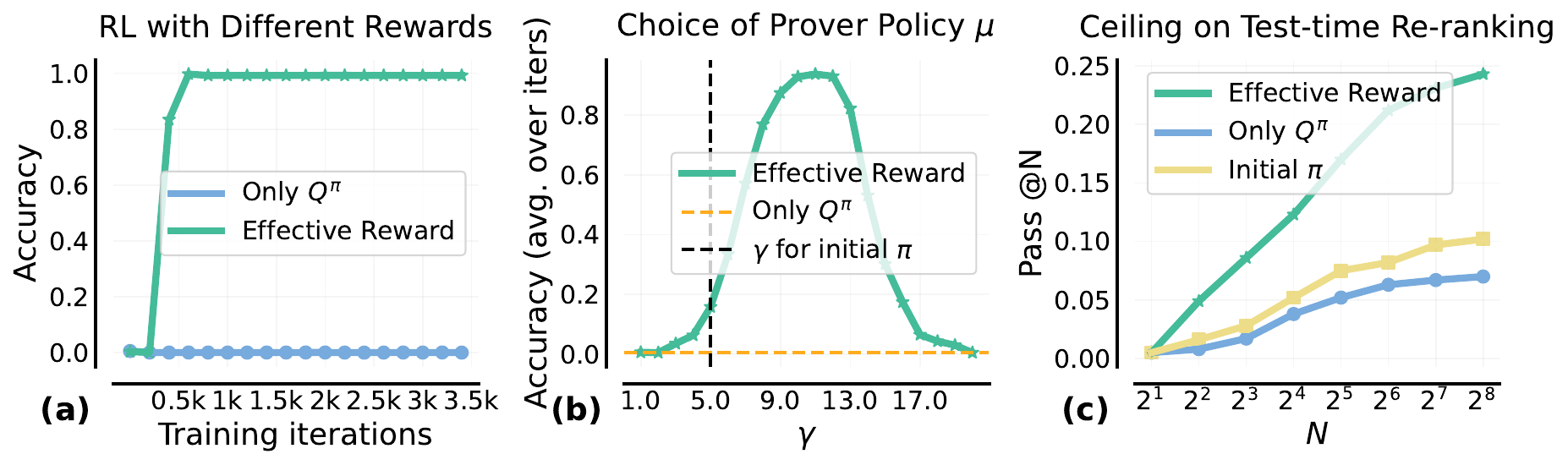}
    \vspace{-0.3cm}
    \caption{\textbf{\emph{Results for our didactic analysis:}} \textbf{(a):} We train base policy via RL with either effective reward $Q^\pi + \alpha A^\mu$, or the typical $Q^\pi$ (computed via Monte-Carlo sampling).  \textbf{(b):} We vary the strength $\gamma$ of the prover $\mu$ used to compute advantages $ A^\mu$ in the effective reward, and plot the base policy accuracy  averaged over the RL run.  \textbf{(c):} We plot the max score out of $N$ responses  (Pass @N)  sampled \textit{i.i.d.} from an undertrained base policy (iter 100) .}
    \vspace{-0.3cm}
    \label{fig:didactic_panel}
\end{figure}

\begin{AIbox}{Takeaway: Online RL with process rewards from different prover policies.}
Effective rewards $Q^\pi + \alpha A^\mu$ from prover $\mu$: (i) improve sample efficiency of online RL, and (ii) yield policies with better Pass @N performance, over using only outcome rewards. But, advantages of very capable or poor $\mu$ do not improve base policy beyond outcome rewards. 
\end{AIbox}

\vspace{-0.2cm}
\subsection{Theory: Provers Complementary to the Base Policy Boost Improvement}
\label{subsec:theory}
\vspace{-0.2cm}

From our didactic analysis, it is clear that process rewards $A^\mu$ under different provers $\mu$ disparately affect the base policy that optimizes $Q^\pi + \alpha A^\mu$ via online RL. We now present a formal analysis of why this happens and characterize a class of provers that can guarantee non-trivial improvements to the base policy.
For simplicity, we assume oracle access to $Q^\pi, A^\mu$ at every state-action pair $(\bs_h, a_{h})$ and prove our result in the tabular RL setting, where the policy class is parameterized using the softmax parameterization in \citet{agarwal2021theory}. Proofs for this section are in \Appref{app:theory-convergence}.

\textbf{Main intuitions.} We expect a prover $\mu$ to improve a base policy $\pi$ only when \textbf{$\mathbf{\mu}$ is able to  \emph{distinguish} different actions taken by $\pi$}, by attaining sufficiently varying advantage values $A^\mu(\bs_h, a)$ for actions $a$ at state $\bs_h$. 
This can be formalized under the notion of sufficiently large variance across actions, $\mathbb{V}_{a\sim \pi} \left[A^\mu(\bs_h, a)\right]$. In that case, can we simply use a policy with large advantage variance under any measure?
No,  because when the prover $\mu$ ranks actions at a given state very differently compared to the base policy $\pi$ (e.g., if $A^\mu$ and $A^\pi$ are opposite), then effective rewards  $Q^\mu + \alpha A^\pi$ will be less reliable due to conflicting learning signals. Thus, we want $\Exp_{\pi}\left[\langle A^\mu, A^\pi \rangle \right]$ to not be too negative, so that \textbf{$\mathbf{\mu}$ and $\mathbf{\pi}$ are reasonably \emph{aligned}} on their assessment of steps from $\pi$. 

In Theorem~\ref{thm:policy-improvement}, we present our result on policy improvement where the base policy is updated with natural policy gradient~\citep{Kakade2001}: $\pi_{t+1}(a\mid \bs_h) \propto \exp(\gamma \cdot (Q^\pi(\bs_h, a) + A^\mu (\bs_h, a)))$. 
We note that in this idealized update rule, swapping $Q$ values (of $\mu$ or $\pi$) with advantages does not affect the update since we assume access to all possible actions when running the update. Nonetheless, despite this simplifying assumption, the analysis is able to uncover good choices for the prover policy $\mu$ for computing process reward $A^\mu$, and is orthogonal to the design consideration of advantages or $Q$-values as process rewards that we have discussed so far in this paper.
Theorem~\ref{thm:policy-improvement} formalizes our intuition by showing that policy improvement at iteration $t$, grows as the variance in $A^\mu$ values increases (higher distinguishability) and reduces when $A^\mu$ and $A^\pi$ become extremely misaligned. This will then allow us to discuss a special case for the case of Best-of-K policies as provers as an immediate corollary.

\begin{theorem}[Lower bound on policy improvement; informal]
\label{thm:policy-improvement}
For base policy iterate $\pi_t$, after one step of policy update, with learning rate $\gamma \ll 1$, the improvement over a distribution of states $\rho$:  
{
\setlength{\abovedisplayskip}{10pt}
\setlength{\abovedisplayskip}{10pt}
\begin{align}
    \Exp_{\bs\sim \rho} \brck{V^{\pi_{t+1}}(\bs)- V^{\pi_{t}}(\bs)} \gsim  \gamma \cdot \underbracket{ \Exp_{\bs\sim \rho} \mathbb{V}_{a\sim \pi_{t}}[A^\mu(\bs, a)]}_{\text{distinguishability from $\mu$}} +  \gamma \cdot \underbracket{ \Exp_{\bs\sim \rho} \Exp_{a\sim\pi_t} \brck{A^\mu(\bs, a)A^{\pi_t}(\bs, a)}}_{\text{alignment between $\pi_t$ and $\mu$}}
\end{align}
}
\end{theorem}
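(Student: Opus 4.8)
The plan is to carry out a first-order expansion of the value functions around the NPG update, in the style of standard softmax policy-gradient analyses (e.g.\ \citealt{agarwal2021theory}). The key identity I would invoke is the performance-difference lemma: for any two policies $\pi_{t+1}, \pi_t$,
\[
V^{\pi_{t+1}}(\bs) - V^{\pi_t}(\bs) = \frac{1}{1-\gamma_{\mathrm{disc}}}\,\Exp_{\bs'\sim d^{\pi_{t+1}}_{\bs}}\,\Exp_{a\sim\pi_{t+1}(\cdot\mid\bs')}\brck{A^{\pi_t}(\bs', a)},
\]
(where $\gamma_{\mathrm{disc}}$ is the discount, here effectively $1$ over the finite horizon $H$, so the prefactor is a constant absorbed into $\gsim$). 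Since the horizon is finite and deterministic, I would instead sum this over the $H$ layers directly. The point is that the improvement is governed by $\Exp_{a\sim\pi_{t+1}}[A^{\pi_t}(\bs,a)]$ at the \emph{new} policy, whereas $\Exp_{a\sim\pi_t}[A^{\pi_t}(\bs,a)] = 0$ by definition of the advantage.

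**Next I would** Taylor-expand the NPG update $\pi_{t+1}(a\mid\bs) \propto \pi_t(a\mid\bs)\exp(\gamma\,(Q^{\pi_t}(\bs,a)+A^\mu(\bs,a)))$ in the small learning rate $\gamma\ll 1$. Writing $g(\bs,a) := Q^{\pi_t}(\bs,a)+A^\mu(\bs,a)$ and noting $Q^{\pi_t}$ and $A^{\pi_t}$ differ only by the state-dependent baseline $V^{\pi_t}(\bs)$ which cancels inside the softmax, we have $\pi_{t+1}(a\mid\bs) = \pi_t(a\mid\bs)\paren{1 + \gamma(\tilde g(\bs,a) - \Exp_{a'\sim\pi_t}\tilde g(\bs,a')) + O(\gamma^2)}$ where $\tilde g = A^{\pi_t}+A^\mu$. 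Substituting into $\Exp_{a\sim\pi_{t+1}}[A^{\pi_t}(\bs,a)]$ and using $\Exp_{\pi_t}[A^{\pi_t}]=0$ gives, to first order,
\[
\Exp_{a\sim\pi_{t+1}}\brck{A^{\pi_t}(\bs,a)} = \gamma\,\paren{\mathbb{V}_{a\sim\pi_t}\brck{A^{\pi_t}(\bs,a)} + \Exp_{a\sim\pi_t}\brck{A^\mu(\bs,a)A^{\pi_t}(\bs,a)}} + O(\gamma^2).
\]
Since the first term $\mathbb{V}_{a\sim\pi_t}[A^{\pi_t}]\ge 0$ is nonnegative, one may drop it or retain it; dropping the $A^{\pi_t}$-variance and keeping $\mathbb{V}_{a\sim\pi_t}[A^\mu]$ instead requires one more step — I would use that $\mathbb{V}_{\pi_t}[A^{\pi_t}+A^\mu] = \mathbb{V}_{\pi_t}[A^{\pi_t}] + 2\,\Exp_{\pi_t}[A^{\pi_t}A^\mu] + \mathbb{V}_{\pi_t}[A^\mu]$ together with $\Exp_{a\sim\pi_{t+1}}[A^{\pi_t}]$ being (to first order) $\gamma$ times $\Exp_{\pi_t}[(A^{\pi_t}+A^\mu)A^{\pi_t}]$, and then rebalance the cross term to exhibit $\mathbb{V}_{\pi_t}[A^\mu]$ plus the alignment inner product, absorbing nonnegative leftovers into the inequality. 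Finally I would take $\Exp_{\bs\sim\rho}$ of both sides, fold the horizon/discount constants and the $O(\gamma^2)$ remainder into the $\gsim$, and arrive at the stated bound.

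**The hard part will be** controlling the $O(\gamma^2)$ remainder uniformly — i.e.\ justifying that for $\gamma$ small enough the higher-order terms do not swamp the $\gamma$-order signal, and that the performance-difference expansion (which references $d^{\pi_{t+1}}$, the \emph{new} state distribution) can itself be replaced by $\rho$ or $d^{\pi_t}$ up to $O(\gamma)$ corrections that are again absorbable. This needs a boundedness assumption on $Q$ and $A$ (automatic here since rewards are in $\{0,1\}$ and the horizon is $H$), plus the standard observation that $\|d^{\pi_{t+1}} - d^{\pi_t}\|_1 = O(\gamma)$. A secondary subtlety is the exact form in which the two terms are displayed: the theorem as stated uses $\mathbb{V}_{a\sim\pi_t}[A^\mu]$ (distinguishability) rather than $\mathbb{V}_{a\sim\pi_t}[A^{\pi_t}]$, so the cross-term rebalancing above must be done carefully, and one likely needs an additional hypothesis (or a constant factor) ensuring $\mathbb{V}_{\pi_t}[A^{\pi_t}]$ doesn't dominate negatively — but since it is nonnegative this only helps, and the genuine content is that a prover with large $\mathbb{V}_{\pi_t}[A^\mu]$ and non-negative alignment $\Exp_{\pi_t}[A^\mu A^{\pi_t}]$ forces strict improvement. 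I would state the boundedness/step-size conditions explicitly in the formal version and then the rest is the routine expansion sketched above.
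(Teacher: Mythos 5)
Your skeleton --- performance-difference lemma followed by a first-order expansion of the softmax/NPG update in $\gamma$ --- matches the paper's, and your honest first-order computation is correct:
\begin{align*}
\Exp_{a\sim\pi_{t+1}}\brck{A^{\pi_t}(\bs,a)} \;=\; \gamma\paren{\mathbb{V}_{a\sim\pi_t}\brck{A^{\pi_t}(\bs,a)} + \Exp_{a\sim\pi_t}\brck{A^\mu(\bs,a)\,A^{\pi_t}(\bs,a)}} + O(\gamma^2).
\end{align*}
The genuine gap is the step you defer to ``rebalancing'': replacing $\mathbb{V}_{\pi_t}\brck{A^{\pi_t}}$ by $\mathbb{V}_{\pi_t}\brck{A^{\mu}}$ in the variance slot. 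No rearrangement of $\mathbb{V}_{\pi_t}\brck{A^{\pi_t}+A^\mu}$ achieves this by ``absorbing nonnegative leftovers'': the discrepancy between your expression and the stated bound is exactly $\gamma\paren{\mathbb{V}_{\pi_t}\brck{A^{\pi_t}}-\mathbb{V}_{\pi_t}\brck{A^{\mu}}}$, which is \emph{negative} precisely in the regime the theorem is about --- a prover that distinguishes actions more sharply than the base policy does. Concretely, at a state where $A^{\pi_t}\equiv 0$ but $A^\mu$ varies, your (correct) first-order improvement is $0$ while the claimed right-hand side is $\gamma\,\mathbb{V}_{\pi_t}\brck{A^\mu}>0$; so the swap is not a bookkeeping step that can be waved through --- it is the entire content of the theorem, and your route does not produce it.

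The paper manufactures the $A^\mu$-variance \emph{before} Taylor-expanding, via a substitution your proposal does not have: invert the NPG update to write $A^{\pi_t}(\bs,a) = \tfrac{1}{\gamma}\log\paren{\pi_{t+1}(a\mid\bs)\,Z^t(\bs)/\pi_t(a\mid\bs)} - A^\mu(\bs,a)$, plug this into the performance-difference identity, discard the resulting nonnegative $\mathrm{KL}\paren{\pi_{t+1}\,\|\,\pi_t}$ term, and Jensen-bound $\log Z^t(\bs) \ge \gamma\,\Exp_{a\sim\pi_t}\brck{A^\mu(\bs,a)}$. What survives is an inner product of $\pi_{t+1}-\pi_t$ against $A^\mu$ rather than against $A^{\pi_t}$, and the first-order expansion of \emph{that} inner product is what yields $\gamma\paren{\mathbb{V}_{\pi_t}\brck{A^\mu}+\Exp_{\pi_t}\brck{A^\mu A^{\pi_t}}}$. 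That substitution-plus-Jensen device is the missing idea; whether one finds its sign bookkeeping fully convincing, it is the only mechanism in the paper that converts base-policy advantage variance into prover advantage variance, and without it your derivation terminates at a bound of a genuinely different (and in the interesting regime, smaller) form. Your remaining worries --- bounded advantages, uniform control of the $O(\gamma^2)$ remainder, replacing $d^{\pi_{t+1}}_\rho$ by $\rho$ --- are handled in the paper essentially as you describe and are not where the difficulty lies.
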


It may seem that the base policy $\pi$ can only learn from an improved prover $\mu$, but our result
 shows that a \textbf{weak prover can also amplify a stronger base policy}, since a weak prover $\mu$ may have a lower average of $Q^\mu$ under its own measure, but still have higher variance across $Q^\mu$ (compared to $Q^\pi$) when evaluated under $\pi$ (see Proposition~\ref{prp:learning-signal} in \Appref{subsec:prp-learning-signal-proof} for formal discussion).
 This tells us that \emph{\textbf{rewarding progress under a prover is different from typical knowledge distillation or imitation learning algorithms}}~\citep{hinton2015distilling,rusu2015policy} that in most cases remain upper bounded by the performance of the stronger teacher. So provers cannot be characterized purely by strength, what is a class of provers that is a reasonable starting point if we were to improve any base policy $\pi$?
 
 \textbf{The policy class of ``Best-of-K'' (computed over base policies) contain complementary provers.} A good starting point to identify good provers for a base policy $\pi$, is the class of Best-of-K policies or $\mathrm{BoK}(\pi)$. Recall from \Secref{sec:prelim} that the performance of $\mathrm{BoK}(\pi)$ increases monotonically with $K$. Applying Theorem~\ref{thm:policy-improvement} to this class, we arrive at Remark~\ref{rem:bok-result} that recommends using $\mathrm{BoK}(\pi)$ with $K>1$ as a prover policy for a poor base policy $\pi$.  However, $K$ cannot be too large always since when $Q^\pi(\bs, a) \approx 1$ , increasing $K$ too much can hurt distinguishability of different steps at that state. In the next section, we empirically note that the policies in the class of $\mathrm{BoK}(\pi)$ indeed induce different performance gains when used as prover policies, and we find $\mathrm{Bo4}$ to be a good choice for test-time search over most base policies.  
 \begin{remark}
\label{rem:bok-result} When $Q^\pi(\bs, a) = O(1/K), \forall \bs, a$, using $\mathrm{BoK}(\pi)$ as a prover for base $\pi$ improves distinguishability (and improvement) by $\Omega(K^2)$, and make  alignment worse at most by $O(K)$.
\end{remark}
\begin{AIbox}{Takeaway: Formal characterization of good prover policies that improve the base policy.}
Provers with advantages that can \textbf{distinguish} actions taken by the base policy (more strongly than the base policy itself) but are \textbf{not too misaligned} from the base, boost improvements on each update of the base policy. We call such policies \textbf{\emph{complementary provers}}. $\mathrm{BoK}(\pi)$ for any base policy $\pi$ for $K>1$ can provide a good starting choice of prover policies.  
\end{AIbox}

\vspace{-0.2cm}
\section{Results: Scaling Test-Time Compute with PAVs}
\label{sec:prms_for_bon}
\vspace{-0.2cm}

\definecolor{lapislazuli}{rgb}{0.15, 0.38, 0.61}

\begin{figure}[!t]
    \centering
    \includegraphics[width=0.95\linewidth]{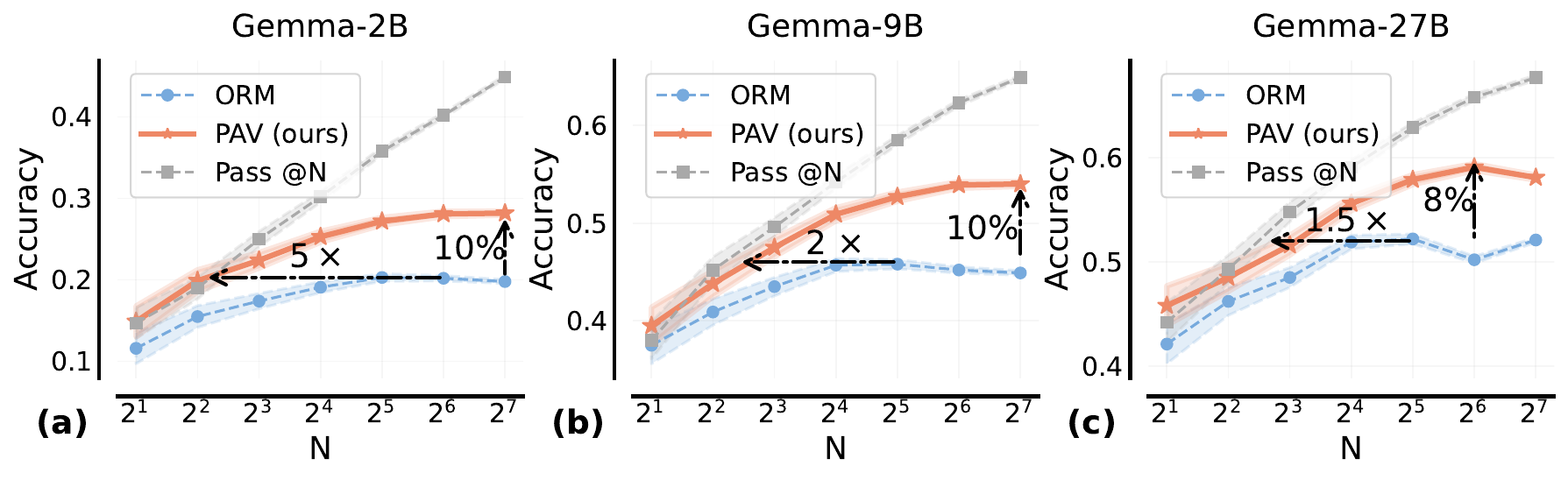}
    \vspace{-0.3cm}
    \caption{\textbf{\textit{For test-time search, PAVs are $8-10\%$ more accurate and $1.5-5 \times$ more compute efficient over ORMs:}}  On samples from \textbf{(a)} Gemma-2B ,  \textbf{(b)} 9B , and \textbf{(c)} 27B SFT policies, we run test-time beam search with the estimate of effective reward $Q^\pi + \alpha A^\mu$ (PAV), where $\mu$ is the $\mathrm{Bo4}(\pi)$ policy. We compare beam search  performance with best-of-N, re-ranking with a trained outcome verifier (ORM), or the oracle $\Rex$ (Pass @N).
    }
    \vspace{-0.45cm}
    \label{fig:bon_panel_1}
\end{figure}

Now, we study how process verifiers can scale up test-time compute. While our derivations from \Secref{subsec:method} were with RL, we can also use the \emph{effective reward} $Q^\pi(\bs_h, a_h) + \alpha \cdot {A^\mu(\bs_h, a_h)}$ for running beam search over intermediate steps sampled from base policy $\pi$. To do so, we train a process advantage verifier to predict $A^\mu$, along with a process reward model $Q^\pi$. PAV training is done using procedures discussed in \Secref{subsec:dataset}. While the candidates of the beam are selected using a combination of both the PAV and the PRM $Q^\pi$, the final candidate is selected using the outcome reward prediction from $Q^\pi$ itself (i.e., we repurpose the PRM representing $Q^\pi$ as an ORM). 
For clarity, we abuse notation and refer to the estimated effective reward (ORM + $\alpha$ PAV) as PAV directly.

\textbf{Setup}. We finetune Gemma 2B, 9B, and 27B~\citep{team2024gemma} on  MATH~\citep{hendrycksmath2021} via supervised fine-tuning (SFT) to get three base policies. 
The set of provers consists of the three base SFT policies  themselves as well as their best-of-K policies for different values of $K \in \{2^{0},\ldots,2^{5}\}$.
Additional details for the experiments in this section are in  \Appref{app:test_time_scaling_additional}.

\vspace{-0.2cm}
\subsection{PAVs Scale Test-Time Compute by \texorpdfstring{$5 - 10 \times$}{Xx} Over ORMs}
\vspace{-0.1cm}
\label{subsec:pavs-scale-test-time-compute}

\textbf{\textcolor{lapislazuli}{Result 1: PAVs are more compute efficient than ORMs.}} In \Figref{fig:bon_panel_1}, we plot the performance of beam search with PAVs for different sizes of the beam $N$, and compare it with  best-of-$N$ using ORMs, \textit{i.e.}, sampling $N$ complete solutions from the base policy and returning the one with the highest ORM score. To compare PAVs and ORMs, we evaluate the \emph{compute efficiency} of PAVs over ORMs, given by the ratio of total compute needed by PAVs to obtain the same performance as running best-of-128 with ORM. Even when accounting for the fact that running beam search with PAVs does require additional compute \emph{per solution trace} (since each element in the beam samples $C=3$ next steps, before scoring and pruning the beam), PAVs are able to scale the compute efficiency by \mathbf{$10\times$} over ORMs for Gemma-2B, 9B base models, and by $5\times$ for Gemma-27B model. 
We use $\mathrm{BoK}(\pi)$ with $K=4$ as the prover policy for all base policies $\pi$. 

We also compare performance with beam search using process verifiers that only predict $Q^\pi$, and best-of-N where the ORM is replaced with PAV (PAV-as-ORM). At $N=128$, similar to \citet{luo2024improve}, we note a similar gain of $4\%$ for ``PAV-as-ORM''~\Figref{fig:bon_panel_2}(a) 
over only ORMs, for base Gemma-9B $\pi$. When comparing beam search with $Q^\pi$~\citep{snell2024scaling}, we find that PAVs scale compute efficiency by $8\times$. Evidently, advantages from the prover in the effective reward positively impact the beam search. Why does $A^\mu$ help, and for what choice of the prover $\mu$?

\begin{figure}[!t]
    \centering
    \includegraphics[width=0.99\linewidth]{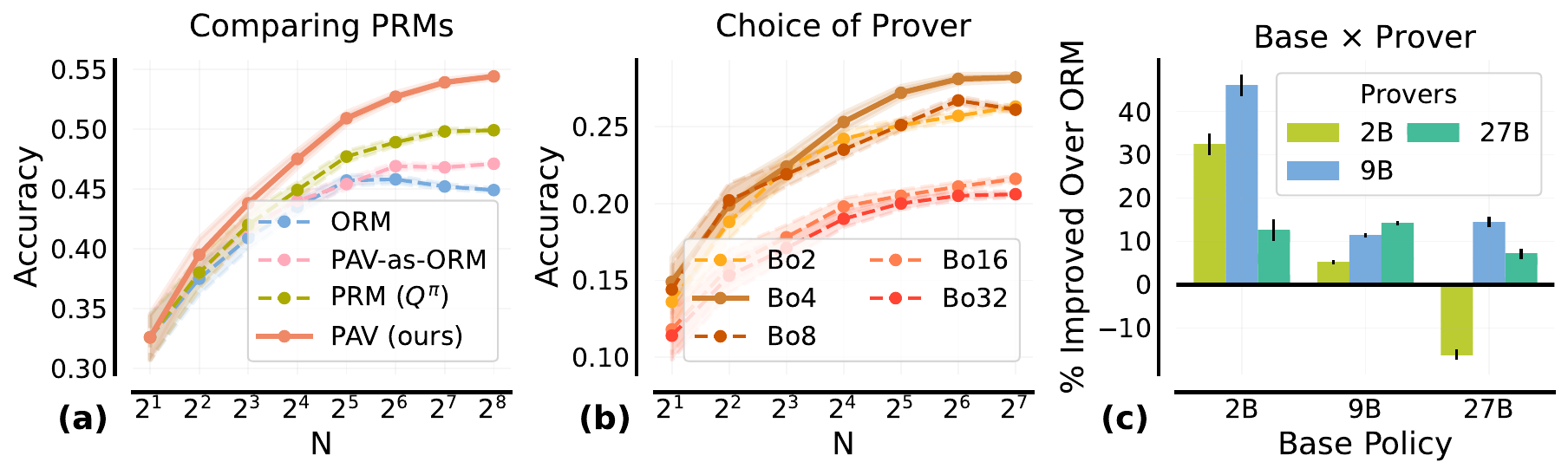}
    \vspace{-0.3cm}
    \caption{
    \textbf{\textit{Comparing PAVs with search baselines and ablating over the prover policy:}} \textbf{(a):} We compare beam search over Gemma 9B SFT, using either effective reward (PAV), or $Q^\pi$~\citep{snell2024scaling}, and report best-of-N performance where the re-ranker is either the ORM or PAV-as-ORM. \textbf{(b):} For the base Gemma 2B SFT policy, we run beam search with the effective reward where the prover is $\mathrm{BoK}(\pi)$ for different values of $K$. In both $(a), (b)$ the x-axis scales the size of the beam or $N$ for best-of-N.  \textbf{(c):} For each base policy in the set: Gemma 2B, 9B, 27B policies, we run beam search with PAVs (beam size of 16) where the prover is another policy from the same set.} 
    \label{fig:bon_panel_2}
\end{figure}

\textbf{\textcolor{lapislazuli}{Result 2: Beam search with too weak/strong provers is sub-optimal.}} 
In \Figref{fig:bon_panel_2}(b), for the setting when the base policy $\pi$ is a Gemma-2B SFT model, we compare beam search with PAVs where the provers are given by $\mathrm{BoK}(\pi)$, for different values of $K$. Recall that as $K$ increases, $\mathrm{BoK}(\pi)$ becomes stronger.
 Corroborating our analysis in \Secref{subsec:theory}, our results show that neither too weak ($\mathrm{Bo}2$) or too strong ($\mathrm{Bo}32$) provers perform best. Instead, across all values of $N$, we find $\mathrm{Bo}4$ to be dominant. The advantage values $A^\mu \approx 0$ on all steps for very large $K$, since $Q^\mu(\bs_h, a_h) = 1-(1-Q^\pi(\bs_h, a_h))^K \rightarrow 1$ on all steps, as we increase $K$. 
 Hence, \emph{\textbf{in order to succeed we need an intermediate-level prover policy}}.

We make similar observations in Figure~\ref{fig:bon_panel_2}(c) where we use the three base policies (Gemma 2B/9B/27B) 
as provers for training PAVs. In this scenario, we evaluate beam search with PAVs at $N=16$ on top of different base policies. We find that for the 2B and 9B base models, the 9B and 27B provers are most effective respectively, whereas for the 27B model, \textbf{\emph{surprisingly a weaker 9B policy is more effective than the stronger 27B model.}} The weaker model presumably offers a complementary signal that distinguishes between different actions taken by 27B, aligning with our theoretical observations in \Secref{subsec:theory}. 

\begin{figure}[!b]
    \centering
    \includegraphics[width=0.7\linewidth]{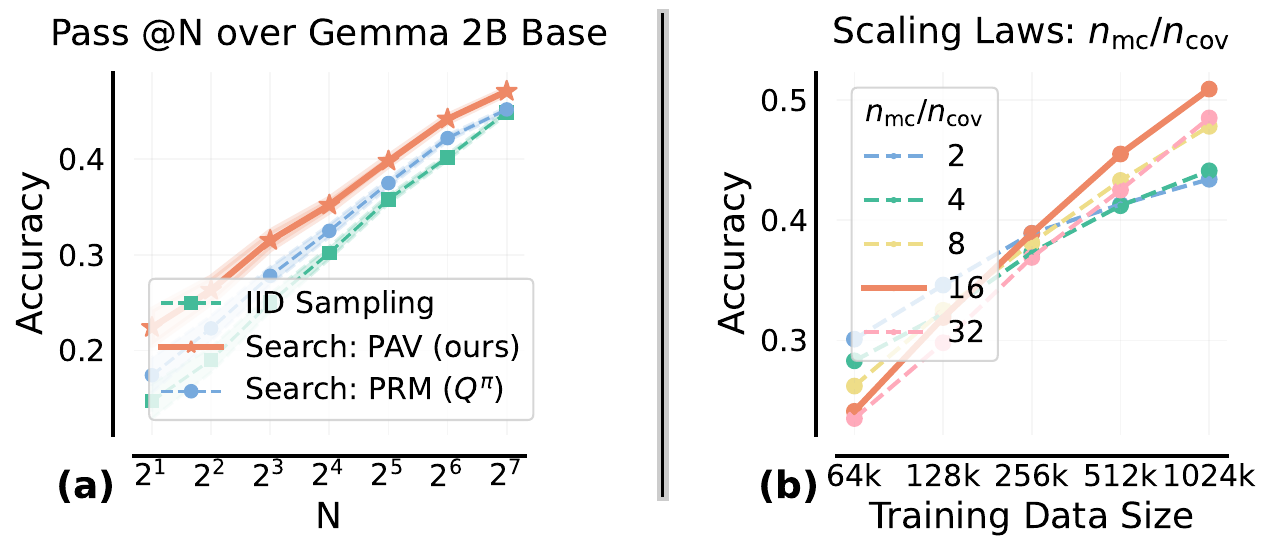}
    \vspace{-0.3cm}
    \caption
    {
    \textbf{(a):} Beam search with PAVs improves exploration efficiency (higher Pass@N), over typical PRMs.
    \textbf{(b):} Performance of beam search over Gemma 9B SFT for PAVs trained on datasets with different  $n_{\mathrm{mc}}/n_{\mathrm{cov}}$.
    }
    \label{fig:scaling_panel_2}
\end{figure}

\textbf{\textcolor{lapislazuli}{Result 3: Advantages from the prover policy enable exploration.}} As discussed in \Secref{subsec:adv_not_value},  advantage $A^\mu$ measures the progress made by an action agnostic of the value of the previous state, where as $Q^\pi$ measures the promise of a particular state. Given a finite capacity beam, our effective reward (\Eqref{eq:prm_rl_grad}), which linearly combines \emph{$Q^\pi$ and $A^\mu$ induces a better tradeoff between exploring new prefixes (states) from where progress can be made and exploiting currently known prefixes with high Q-values.}  
Exploration at initial steps is critical to ensure that the beam at later steps covers diverse partial rollouts each with a high likelihood of producing the correct answer. Thus over-committing to the beam with actions from the same state, regardless of the progress made by each can prove to be sub-optimal over a selection strategy that balances rewarding previous actions $A^\mu$ and current states $Q^\pi$. Indeed, we observe in \Figref{fig:scaling_panel_2}(a), beam search with PAV enhances pass@N performance vs. beam search with $Q^\pi$ and \textit{i.i.d.} sampling.

\begin{AIbox}{Takeaways: Scaling test-time compute with process advantage verifiers.}
\begin{itemize}[leftmargin=0em]
    \setlength\itemsep{0em}
    \item Beam search with PAVs boosts accuracy by >8\% \& compute efficiency by 1.5-5x over ORMs.  
    \item Utilizing Best-of-K policies (corresponding to the base policy) as provers induce better exploration to maximize outcome reward. Optimal provers for a base policy appear at $K>1$.
\end{itemize}
\end{AIbox}

\vspace{-0.2cm}
\subsection{How to Collect Data to Train PAVs?: PAV Training Data Scaling Laws}
\label{subsec:dataset}
\vspace{-0.2cm}

We now describe the procedure for training outcome verifiers and PAVs. We can learn to predict $Q^\pi$ for a policy $\pi$ (similar for $Q^\mu$) by finetuning LLMs with a cross-entropy loss on the following data with triplets $(\bs, a, Q_\mathrm{mc}^\pi(\bs, a))$. To collect this data, we first sample $n_\mathrm{cov}$ {``seed''} rollouts from the base or prover policy respectively for ORM and PAVs, to promote coverage over prefixes and steps.  
Then we sample $n_\mathrm{mc}$ additional rollouts, conditioned on each prefix in the seed rollout to compute the Monte-Carlo estimate of $Q^\pi$ at each prefix. 
In \Figref{fig:scaling_panel_2}(b) we plot the beam search performance  of PAVs trained with different ratios of $\nicefrac{n_\mathrm{mc}}{n_\mathrm{cov}}$, as we scale the total dataset size. Here, the beam size is fixed to 128 and the base policy is the Gemma 9B SFT policy and prover is $Bo4$ policy.
\textbf{We find that} under low sampling budgets, optimizing for coverage (${n_\mathrm{cov}} > {n_\mathrm{mc}}$) is better for performance, and when budget is higher, reducing label noise in $Q^\pi_\mathrm{mc}$ by setting ${n_\mathrm{mc}} > {n_\mathrm{cov}}$ gets us more improvements. 
In addition, we also spend some initial sampling budget is spent to identify ``high value'' states where $Q^\pi$ is larger than a threshold, and identify the first step with low $Q^\pi$ on an incorrect partial rollout from this state. We found this strategy to scale better with dataset size, as we discuss in \Appref{app:dataset_pavs}.

\vspace{-0.2cm}
\section{Results: Scaling Dense-Reward RL with PAVs}
\label{sec:prms_for_RL}
\vspace{-0.2cm}

We can also use PAVs to train policies via online reinforcement learning (RL), by using the  effective reward $Q^\pi + \alpha A^\mu$ as dense, per-step rewards. We compare the sample efficiency of PAV-RL (i.e., $\ell_{\mathrm{PAV-RL}}$ in \Eqref{eq:prm_rl}) with standard ORM-RL (i.e., $\ell_{\mathrm{ORM-RL}}$ in \Eqref{eq:standard_rl}) on Gemma 2B and Gemma 9B SFT models, which are further optimized via rejection finetuning (RFT)~\citep{yuan2023scaling}, before using them to initialize RL. To our knowledge, no prior work has successfully demonstrated the use of dense per-step feedback with a process reward model for RL, and we present the first  significant set of  results establishing the efficacy of this approach. We show that PAV-RL is much more sample-efficient, and enjoys a higher ceiling on the performance of any test-time re-ranker. Additional details for the experiments are in \Appref{app:rl_with_pavs_additional}.

\begin{figure}[!t]
    \centering
    \includegraphics[width=0.99\linewidth]{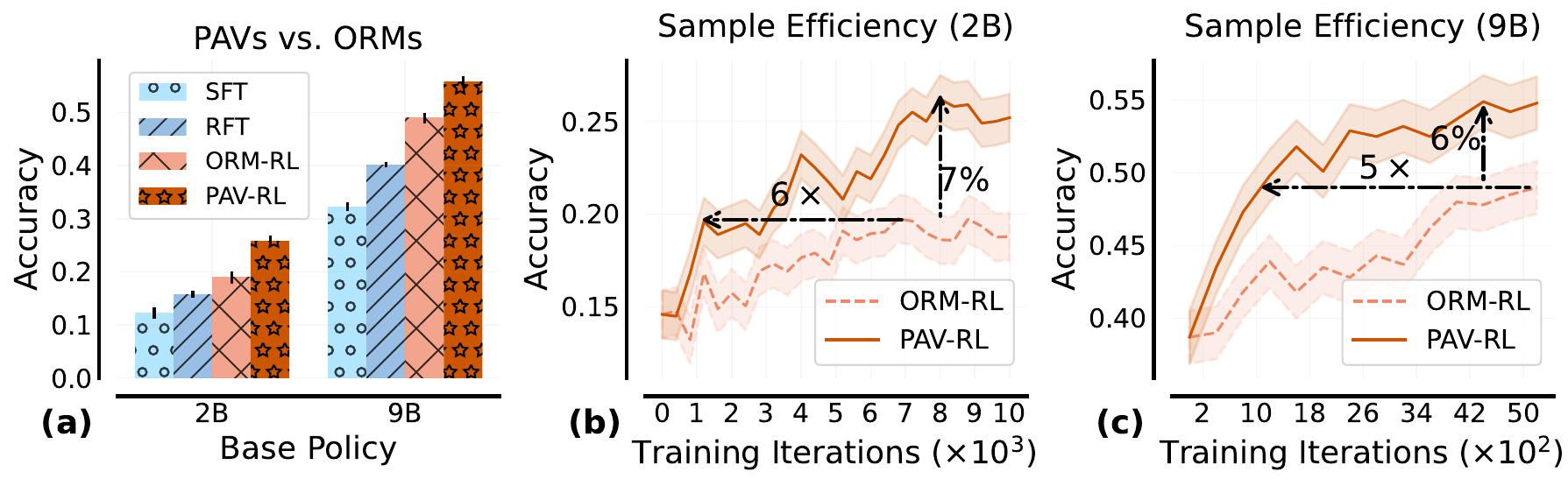}
    \caption{\textbf{\textit{PAVs as dense rewards in RL improve sample efficiency compared to ORMs, along with gains on raw accuracy:}}
    \textbf{(a)} We report the performance of a base policy trained using RL with effective rewards (PAV-RL), or only outcome rewards (ORM-RL), and baselines SFT, RFT.
    \textbf{(b,c):} Across training iterations, we report the test performance of policies trained with PAV-RL and ORM-RL, on Gemma 2B and 9B SFT base policies.
    }
    \label{fig:rl_panel}
\end{figure}

\textbf{\textcolor{lapislazuli}{Result 1: PAV-RL is $> 7\%$ better than ORM-RL in test accuracy, and $6 \times$ sample efficient.}}
In \Figref{fig:rl_panel}(a), we report the test accuracies of Gemma 2B and 9B models trained with SFT, RFT, ORM-RL and PAV-RL. PAV-RL improves the RFT policy by $11\%$ for 2B, and $15\%$ for 9B, with  $>7\%$ gain over ORM-RL in both cases. Not only do the effective rewards from PAV improve the raw accuracy after RL, this higher accuracy is attained $6\times$ faster (see \Figref{fig:rl_panel}(b)) for the 2B run and similarly for the 9B RL run (\Figref{fig:rl_panel}(c)). For both 2B and 9B, RL runs, we experiment with two options for the prover policy: \textbf{(i)} 2B SFT policy; and \textbf{(ii)} 9B SFT policy. While both of these provers rapidly become weaker than the base policy within a few gradient steps of RL, a fixed PAV trained with each of these provers is able to still sustain performance gains in RL. More interestingly, we find that the 2B SFT policy serves as the best choice of the prover for both 2B and 9B policies. This observation that a weak prover can still improve the base policy corroborates our results in the didactic setup and our analysis in \Secref{subsec:theory}. While we were not able to run experiments where the prover policy is dynamically updated on the fly, we believe that updating the prover through the process of RL training should only amplify these benefits.

\textbf{\textcolor{lapislazuli}{Result 2: 
PAV-RL achieves higher performance ceiling on test-time re-ranking.}} In \Figref{fig:rl_ablation}(a), for Gemma 2B, we plot the Pass @N performance for each method, and find \textbf{(i)}  Pass @N  is  higher ($>7\%$) for PAV-RL, compared to ORM-RL, for any $N\leq 128$; and \textbf{(ii)} the rate at which Pass @N improves for PAV-RL is higher than ORM-RL. Both trends are consistent with our observations on the didactic example in \Secref{subsec:stylized_problem}.
Notably, for $N\geq 64$, ORM-RL is worse than the SFT policy, perhaps due to lower entropy over the distribution at the next step resulting in non-diverse candidates.
Why does PAV-RL produce diverse candidates, and does not suffer from the low diversity problem in ORM-RL? We answer this with a key insight on how \emph{\textbf{the primary benefit of  PAVs is to promote efficient exploration.}}

\begin{figure}[h]
    \centering
    \includegraphics[width=0.7\linewidth]{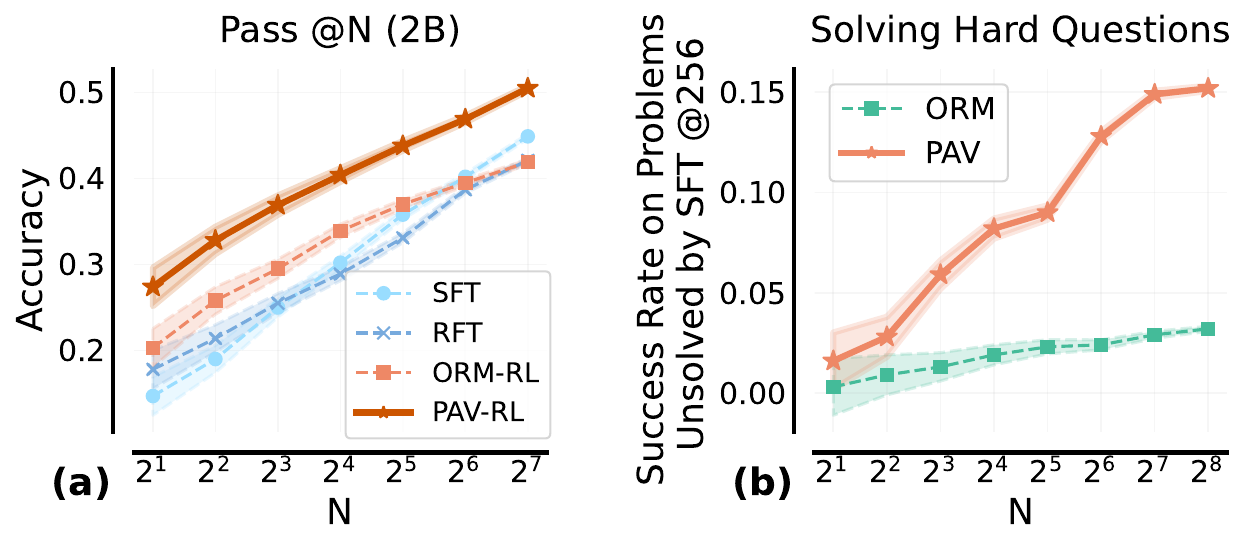}
    \caption{
    \textbf{(a):} For the policies trained in (a) we report the best-of-N performance where the oracle reward $\Rex$ is used to rank $N$ candidates sampled from the base policy (Pass @N).
    \textbf{(b):} Amongst hard problems that remain unsolved by Best-of-256 over the base SFT policy, we check how many are solved by Best-of-N over PAV-RL or ORM-RL. PAV-RL is able to solve a substantially more problems than what ORM-RL was able to solve.
    }
    \label{fig:rl_ablation}
\end{figure}

\textbf{\textcolor{lapislazuli}{Result 3: PAVs improve exploration and discover correct solutions to novel problems.}} An outcome reward model rewards downweight all steps in an incorrect rollout equally during RL, 
whereas the effective reward $Q^\pi + \alpha A^\mu$ in PAVs, up-weights steps that make progress under the prover, even when the complete rollout is incorrect. 
This increases the coverage over individual steps that can improve the likelihood of the base policy to succeed (since the prover policy is not too misaligned with the base policy). These can now be proposed by the base policy at a given prefix. This mechanism for exploration is analogous to test-time search we discussed in \Secref{subsec:pavs-scale-test-time-compute}.
Hence, the directed supervision from PAVs improves sample-efficiency throughout the course of training (\Figref{fig:rl_panel}(c)). In fact, we also find that combining the PAV-RL policy with test-time beam search is able to solve a substantially larger number of \emph{new} problems within smaller compute budgets ($N=16,32$) that the SFT policy cannot solve with a much larger budget $N=256$ (\Figref{fig:rl_ablation}(b)).

\begin{AIbox}{Takeaway: RL with process advantage verifiers (PAVs) as dense rewards}
\begin{itemize}[leftmargin=0em]
    \setlength\itemsep{0em}
    \item Using trained PAVs as dense rewards in RL boosts scales sample efficiency by $5-6\times$, compared to only using sparse ORM rewards, and results in policies with a higher Pass @N performance. 
    \item Advantages from a complementary prover policy improves the sample efficiency of exploration in RL, and produces policies that can discover solutions to hard novel questions.
\end{itemize}
\end{AIbox}

\vspace{-0.2cm}
\section{Related Work}
\label{sec:relwork}
\vspace{-0.2cm}
We briefly discuss some key related works here, and leave the detailed discussion for \Appref{app:relwork_additional}. To address issues of sparse feedback in ORMs~\citep{cobbe2021training}, recent works~\citep{lightman2023let,uesato2022solving} trained
process reward models (PRMs) to densely predict incorrect steps in a multi-step reasoning trace. Since human data collection for process labels is not scalable enough, recent work~\citep{wang2024mathshepherd,luo2024improve} used automated supervision to annotate steps with $Q$ values under the base policy, \textit{i.e.}, the PRMs score a step with the likelihood of future success, when continuing to sample from the step. While $Q$-value PRMs in \citet{lightman2023let,luo2024improve} were mainly used as verifiers for re-ranking, \citet{snell2024scaling} used them for test-time beam search.  \citet{shao2024deepseekmath} uses PRMs for RL but found a gain of only $1-2\%$ with PRMs.
In our work, we question solely relying on $Q$-values or advantages of the base policy, and find that measuring progress (i.e., advantages) under a different prover policy can amplify exploration, thus boosting test-time search and RL. To our knowledge, we are the first to show substantial gains in compute and sample efficiency with PRMs.
Our methodology for data collection is similar to~\citep{setlur2024rl,hwang2024self} (i.e., identify ``first pits'' in reasoning traces), these works only use it to collect preference pairs. Beyond all of these, we also characterize which policy to use for computing advantages.

Concurrently to us, akin to the methodology in~\citet{setlur2024rl,hwang2024self}, \citet{kazemnejad2024vineppo} optimize the base policy $\pi$ with online RL, where the dense step-level rewards correspond to advantages $A^\pi$ under the base policy $\pi$ itself. This is a special case of our setting, where the prover policy $\mu = \pi$, but as we note in \Secref{subsec:method}, setting $\mu=\pi$ in our effective reward (\Eqref{eq:prm_rl_grad}) results in exactly the same policy gradient updates as only optimizing the outcome reward. Since \citet{kazemnejad2024vineppo} use ``on-the-fly'' Monte-Carlo rollout estimation to estimate advantages, they are able to avoid estimation errors in the process reward model. Nonetheless, our theoretical result and the didactic example (both of which assume access to perfect advantage estimates) show that gains from this approach are significantly smaller than using an appropriate prover policy, which is distinct from the base policy.

\vspace{-0.2cm}
\section{Discussion and Conclusion}
\label{sec:conclusion}
\vspace{-0.2cm}

We began our exposition with the following question: how to define process rewards such that optimizing the base policy against process rewards ultimately improves the outcome level correctness of the final answer? 
Our key finding is that process rewards defined as advantages of a prover policy, distinct from the base policy improve the efficiency of exploration for steps sampled from the base policy during test-time search and online RL. This improved exploration in turn leads to discovery of better solutions, resulting in a higher accuracy on the math reasoning task.
We also formally characterized the set of good prover policies as policies with step-level advantages that meaningfully contrast steps generated by the base policy, while still producing step-level advantages that are aligned with the base policy. Having trained process advantage verifiers (PAVs) to predict advantages under the prover policy, we empirically observed that test-time search against the trained PAVs improve the compute-efficiency of search by $1.5-5\times$, and accuracy of search by over $8\%$ compared to running best-of-$N$ against an ORM. Next, we present one of the  significant results that validate the use of dense supervision when optimizing the base policy with online RL. Specifically, we show that dense online RL with rewards from our trained PAVs, improves sample efficiency of online RL by $5-6 \times$, and results in an accuracy gain of over $6\%$.

\textbf{Limitations.} Despite the promise of our results, there are several limitations to our work that present important avenues for future research. First, while we can easily compute the right hand side of our result in Theorem~\ref{thm:policy-improvement} to understand whether a given prover policy will improve a fixed base policy, it is unclear how to automatically design a flexible class of optimal (or very good) prover policies for a sequence of base policy iterates. Perhaps simultaneously optimizing the prover and the base policy (in a two-player game) might provide for an approach to obtain the best prover during RL, but this is largely an open question. Second, since inevitably learning a process advantage verifier (PAV) will incur fitting errors and this upper bounds peformance of our method. Fitting errors can be circumvented if our approach if we can simply run rollouts from prover policies during online RL or search to estimate advantages without training verifiers,  and extending our approach to this setup is a good avenue for future work.

\vspace{-0.2cm}
\section*{Acknowledgements}
\vspace{-0.2cm}
The authors would like to thank Charlie Snell, Yi Su, Katherine Heller, and Virginia Smith for feedback on an earlier version of this paper. We also thank Ahmad Beirami, Sergey Levine, Victor Veitch, Idan Shenfeld, Arian Hosseini, Stephen Pfohl, Xiangyu Qi, Tianhe Yu, and Christina Baek for technical discussions. AS and CN also thank Preston Robinette, Sho Kannan, Tianze Shi, Diana Mincu, Hritik Bansal, and Liangchen Luo for code, infrastructure and data analytics support.

\bibliographystyle{abbrvnat}
\nobibliography*
\bibliography{main}

\newpage
\part*{Appendices}

\begin{appendix}

\section{Additional Related Work}
\label{app:relwork_additional}

In this section, we highlight works from four relevant streams, expanding on discussion in  Section~\ref{sec:relwork}.
First, we look at works that train verifiers to provide outcome level feedback~\citep{cobbe2021training,hosseini2024v,zelikman2022star,singh2023progprompt} on the correctness of the full response (ORM).
Here, the trained ORMs are mainly used for test-time search (best-of-$N$). 
Next, we look at works that alleviate issues with sparse feedback in ORMs, and instead train process reward models (PRMs), that can perform credit assignment. PRMs are trained either  through human annotations~\citep{lightman2023let,uesato2022solving}, or automated forms of supervision~\citep{snell2024scaling,wang2024mathshepherd,luo2024improve}.
While some works use PRMs and ORMs to collect data for supervised fine-tuning~\cite{hosseini2024v} or offline RL~\cite{setlur2024rl}, other works directly use them  as rewards in online RL~\citep{wang2024mathshepherd,uesato2022solving,shao2024deepseekmath}. 
Finally, we contrast our work against papers on imitating stronger teacher policies via RL objectives that optimize potential functions of teacher policies.

\textbf{Outcome reward models.} ORMs are verifiers~\citep{cobbe2021training, uesato2022solving} commonly used to improve the test-time performance using best-of-$N$, where we generate multiple candidate solutions from the base policy (LLM), rank them using the ORM, and pick the best one. ORMs are trained to assess correctness of a solution either using binary classification~\citep{cobbe2021gsm8k, yu2023outcome}, preference optimization using DPO~\citep{hosseini2024v}, or next-token prediction~\citep{zhang2024generative}. Furthermore, prior works train LLMs on self-generated data using ground-truth outcome rewards~($\Rex$), either via supervised fine-tuning~\citep{zelikman2022star,singh2023beyond, yuan2023scaling}, or online RL~\citep{bi2024deepseek}.
In contrast to these approaches, our work focuses on process reward models~(PRMs) for improving performance with beam-search at test time as well as online RL where we maximize the effective reward in \Eqref{eq:prm_rl_grad} which linearly combines both $\Rex$ (outcome supervision) and process supervision in the form of advantages $A^\mu$ under a prover policy $\mu$.

\textbf{PRMs and credit assignment.} Several works focus on training step-level PRMs on math reasoning tasks, either using human labels~\citep{lightman2023let} or automated LLM-generated data to estimate value functions $Q^\pi$~\citep{wang2024mathshepherd, luo2024improve}. Our work also focus on automated data collection for PRMs but empirically argues for  using the advantage function $A^\mu$ as step-level rewards along with $Q^\pi$, with a conceptual explanation in Section~\ref{subsec:adv_not_value}.
Several prior works have explored step-level search algorithms with PRMs, such as beam search~\citep{snell2024scaling}, heuristic greedy search~\citep{ma2023let}, and reward-balanced tree search~\citep{wu2024empirical}. ~\citet{hwang2024self, setlur2024rl} use advantages to identify the ``first pit'' in an incorrect reasoning trace. Specifically, they collect data by computing advantages at each step using Monte Carlo rollouts. Then in an incorrect trace, they identify the step with  the least advantage, and use the prefix of that step to construct preference pairs for offline direct preference optimization~\citep{rafailov2023direct}. 
In contrast, our work computes advantages under a prover policy, that we formally characterize, and use the computed advantages for improving test-time search and efficiency of online reinforcement learning.

\textbf{Online RL for math reasoning.} 
Once we have a trained outcome or process verifiers, it is natural update a policy by optimizing it against the learned signal, similar to how learned reward models are optimized in RLHF~\citep{ouyang2022training}.
In the context of math reasoning, \citet{uesato2022solving, havrilla2024teaching, shao2024deepseekmath} trained policies with RL, experimenting with both dense and sparse rewards. In all three works, the gains observed by using PRMs that predict step-level correctness (similar to  \cite{lightman2023let}) is quite small, compared to simply using trained ORMs, or the ground-truth outcome supervision $\Rex$. In fact, \citet{havrilla2024teaching} states that the only algorithm that does well is a form of expert iteration~\citep{anthony2017thinking}, which does not inhibit exploration as severely as some other approaches they compare with.
Our work presents one of the first results, where trained PRMs, used in conjunction with the outcome rewards during online RL, result in policies with substantially higher ($+6\%$) performance, than the one trained only with outcome supervision. Our results also indicate a $5-6 \times$ sample efficiency boost for online RL, with our trained PAVs.

\textbf{Connections to imitation learning through RL.} The idea of mixing potential functions from different policies $\mu$ and $\pi$, in order to improve upon a sub-optimal expert $\mu$ appears in~\citet{chang2015learning}, but this work  considers the structured prediction problem which is vastly different from our setting. Related to this, is the work by \citet{chang2023learning}, which uses a ``guide'' policy to rollout from prefixes generated by a base policy. The base policy can now imitate the guide by cloning those rollouts, and eventually surpass. 
Our work also uses a prover policy which can complete rollouts from states where the base policy fails. But, we also show that weak provers in many cases are able to improve the base policy, or search over its responses, better than a stronger prover policy. We tie this observation to the insight that the main goal of the prover policy is to distinguish steps taken by the base policy, as measured by advantages under the prover. Thus, we do not require the prover policy to be something better than the base policy, which is a key distinction with~\citet{chang2023learning}.

\begin{figure}
    \centering
    \includegraphics[width=0.99\linewidth]{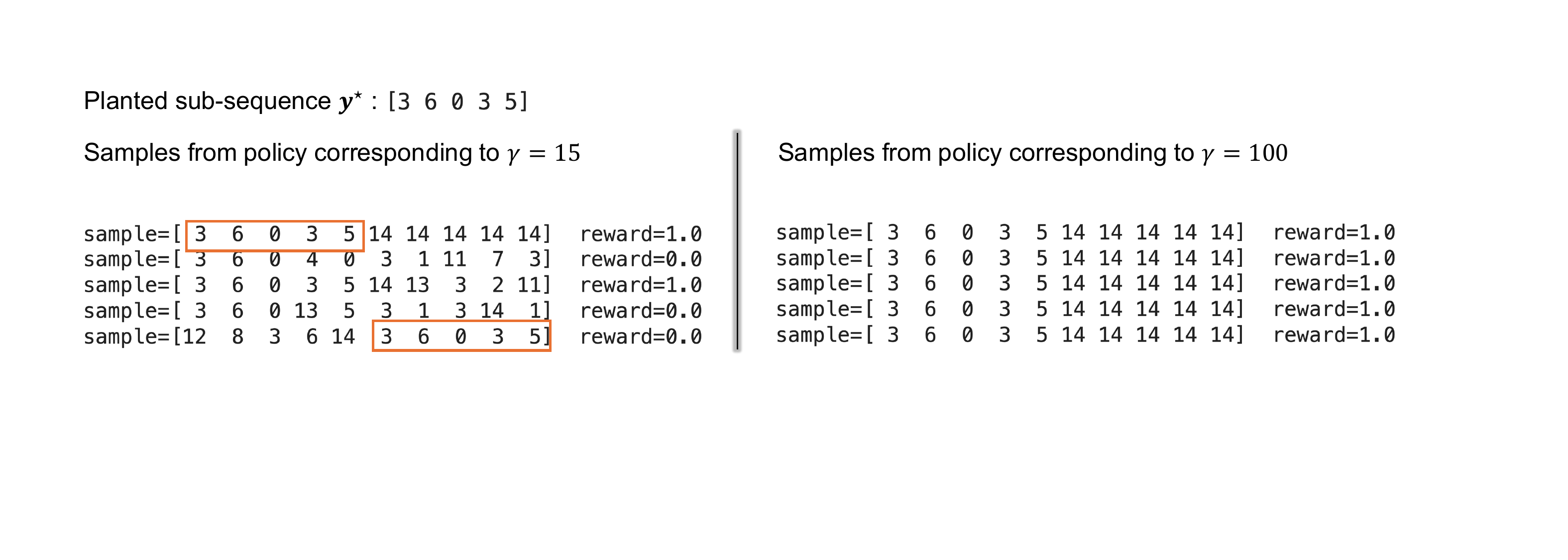}
    \caption{\textbf{\emph{Pictorial description of our planted sub-sequence didactic setup}:} An example showing five samples drawn  \textit{i.i.d.}  from a very strong policy ($\gamma=100$), and a relatively weaker ($\gamma=15$) policy in our didactic setup.}
    \label{fig:didactic_example}
\end{figure}

\section{Didactic Analysis}
\label{app:toy_additional}

 We consider sequences of length 10 from a 15-token vocabulary $\mathcal{V} \coloneqq \{1,2, \ldots, 14\}$, where the end-of-sequence token is given by $14$, and all tokens following the end-of-sequence token (including it) are masked. Given an unknown planted sequence $\by^\star$ (in~\Figref{fig:didactic_example}), we train a policy $\pi$ with policy gradient, where the outcome reward we wish to optimize is  terminal and sparse, \textit{i.e.}, for $\by \sim \pi$ we have $r(\by, \by^\star) = 1$ if and only if $\by^\star$ appears in $\by$, and $0$ otherwise (\Figref{fig:didactic_example}). The policy $\pi$ in our experiments is represented by a multi-layer neural network, similar to the MADE architecture~\citep{germain2015made}. The prover policy $\mu$  is parameterized by a scalar $\gamma > 0$. In particular, at any state $\bs$, where the last $k$ tokens leading up to $\bs$ match first $k$ tokens of $\by^\star$, then:  $$\mu(\by^\star_{k+1} \mid \bs) \propto \gamma,$$ and uniform on all other tokens. Thus, as $\gamma$ increases, the performance of  $\mu$ improves and  $\rightarrow 1$ as $\gamma \rightarrow \infty$. For our experiments, we assume (almost) oracle access to ground-truth advantage and $Q$-values, thus mitigating any confounding issues due to usage of a learned verifier. We are able to approximate exact $Q$-values very accurately by using Monte Carlo estimates with large $>100$ rollouts. With the goal of optimizing the terminal reward $r(\by, \by^\star)$, we optimize $\pi$ with two types of rewards: (i) only the outcome reward $r(\by, \by^\star)$, which is equivalent to using only $Q^\pi$ as step-level rewards; and (ii) using the effective reward: $Q^\pi + \alpha A^\mu$ as the step-level reward. 
 
\textbf{Training details.} We use effective rewards with $\alpha=1$, and use the gradient in \Eqref{eq:prm_rl_grad} to update the policy via policy gradient iterations. For the ORM runs, where we only use the outcome reward $r(\by, \by^\star)$, the policy gradient is equivalent to the case where $\alpha=0$ in \Eqref{eq:prm_rl_grad}. We train for 10,000 iterations in both cases, with a batch size of 64, and a constant learning rate of $1e-3$ for the Adam optimizer. The RL runs are initialized with a supervised finetuned policy. For this we take a randomly initialized network, based on the MADE architecture~\citep{germain2015made}, with 3 layers, and 128 hidden units in each. Then we train it with supervised next-token prediction loss for 50 iterations on a dataset of $3200$ samples from a weak policy $(\gamma=5.0)$. The batch size for the SFT training is also set to 64. For evaluating Pass @$N$ performance, we either sample $N$ independent trajectories (temperature 1.0) from the base policy trained using effective rewards, or only $Q^\pi$. We also evaluate Pass @$N$ for the SFT policy for comparison.

\section{Additional: Experiments on Test-time Search with PAVs}
\label{app:test_time_scaling_additional}

\textbf{Implementation details.} For our experiments in \Secref{sec:prms_for_bon}, we use three pretrained models: Gemma 2B, 9B and 27B.
We finetune each of these on the MATH~\citep{hendrycksmath2021} dataset. The finetuning is done for 5000 iterations, with a batchsize of 32, and a maximum learning rate of $5e-6$ for 2B, 9B and $5e-7$ for the 27B models. 
We trained the policies using the Adam optimizer, with a linear warm up and cosine decay learning rate schedule. The linear warm up is done for the first 500 iterations.
For the base policies, we choose the SFT checkpoints with the best accuracy on a holdout validation set of the MATH dataset.
Given the SFT checkpoints, we next train PAVs using the procedure in \Secref{subsec:dataset}. We do this for a class of provers, which include the base policies themselves. As we discuss in \Secref{sec:prms_for_bon}, the prover class also includes the best-of-$K$ policy for $K$ in $\{2, 4, 8, 16, 32\}$. 

We use the hold out validation set to ascertain the value of $\alpha$ in the effective reward. For each base policy we run beam search with a beam size of 16 on this hold out validation set, and using the base policy itself as a prover, we evaluate the value of $\alpha$ that works best in the effective reward. We find that $\alpha=0.5$ worked best for Gemma 2B and 9B base policies, while a lower value of $\alpha=0.2$ was optimal for Gemma 27B. To tune $\alpha$ we ran a grid search over the range $[0.0, 1.0]$, evaluating at an interval of $0.1$. We observe that the choice of $\alpha$ is a relatively robust one, since for all three base policies, we saw improvements (over only $Q^\pi$  as the reward) for values in the range of $[0.2, 0.6]$. Having a separate value of $\alpha$ for each base policy, we use the same value in the effective reward given by any choice of the prover policy that is used for that base policy. Next, we present an experiment that compares the predictive power of effective reward vs. just $Q^\pi$ at initial states of a rollout under the base policy $\pi$, when either is used to  predict the final outcome given by $\Rex$.   

\textbf{Experiment: Is the effective reward able to predict the final outcome better than $Q^\pi$?} 
In~\Figref{fig:test_statistic}, we describe an experiment where for both the effective reward $Q^\pi + \alpha A^\mu$ (PAV) and just $Q^\pi$ (PQV), we compute the error of the classifier that makes a prediction on the final outcome by thresholding on either reward value at each step of the rollout. This threshold is computed using a validation set, and is separate for each step and reward combination. The figure tells us that the outcome prediction error drops for both rewards as the base policy is rolled out more, but clearly the effective reward dominates $Q^\pi$ (PQV) across all steps. Thus,  the effective reward is a more informative signal (lower classification error) for the problem of predicting the success of a partial rollout, especially in the earlier steps of the rollout. This helps to explain  the better performance of  beam search with a finite capacity beam that re-ranks partial rollouts with the effective reward. For this experiment, we use the Gemma 9B SFT policy as the base policy and the best-of-$4$ policy corresponding to the same SFT policy as the prover. 

\begin{figure}[!ht]
    \centering
    \includegraphics[width=0.45\linewidth]{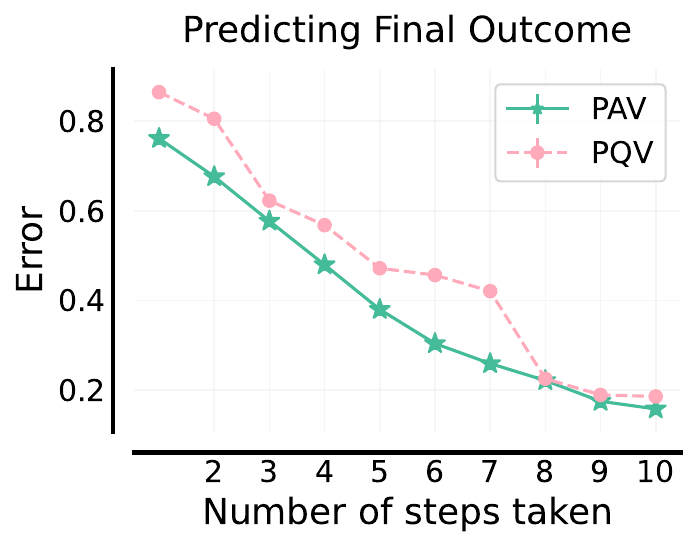}
    \caption{\textbf{\emph{Effective rewards at any step are able to predict the outcome rewards at the final step, better than $\mathbf{Q^\pi}$}:}For both the effective reward $Q^\pi + \alpha A^\mu$ and just $Q^\pi$, we compute the error of the classifier that makes a prediction on the final outcome by thresholding on either reward value at each step of the rollout. This threshold is computed using a validation set, and is separate for each step and reward combination. The figure tells us that the outcome prediction error drops for both rewards, as the base policy is rolled out more, but compared to $Q^\pi$ at an intermediate step, the effective reward $Q^\pi + \alpha A^\mu$ at an intermediate step is able to reliably predict the final outcome level correctness of the future rollout under the base policy $\pi$.}
    \label{fig:test_statistic}
\end{figure}

\section{Details on Collecting Data and Training PAVs}
\label{app:dataset_pavs}

In \Figref{fig:scaling_panel_2}(b) in \Secref{subsec:dataset}, our seed rollouts are \textit{i.i.d.} sampled from $\pi$, but prior works~\citep{luo2024improve,setlur2024rl,hwang2024self} employed a 
``first pit'' strategy for coverage. Here, some initial sampling budget is spent to identify ``high value'' states where $Q^\pi$ is larger than a threshold. Then, for any incorrect sample from these high value states greater budget is spent to estimate the first step (first pit) with $Q^\pi=0$. All prefixes (and their estimated $Q$-values) until the first pit are then added to the training data. 
In \Figref{fig:first_pit}, we compare beam search using PAVs trained using data from the first pit strategy, and the random sampling strategy. Both of them use the best value of $\nicefrac{n_\mathrm{mc}}{n_\mathrm{cov}}$ from \Figref{fig:scaling_panel_2}(b) for every dataset size. We find the first pit strategy to be better than random, especially when the number of seed rollouts are limited. Once we get coverage over such pits, we sample a large number of partial rollouts conditioned on each prefix until the first pit. This is used to compute the Monte Carlo estimate of $Q$ values more accurately on the path to the first pit. Each prefix and estimated $Q$ value pair is then added to the dataset used to train PAVs. 

\textbf{Training details.} All PAVs used in this work are trained by taking the Gemma 9B pretrained checkpoint and finetuning it on the data collected from the above strategy. The data collection uses first pit strategy for better coverage over pits in the seed rollouts. Based on findings in \Figref{fig:scaling_panel_2}(b), we use a high value of $n_{\mathrm{mc}}=20$ to estimate the $Q$-values accurately for each step in the seed rollout. For each base policy, in total, we collect a dataset of over $300,000$ (prefix, $\hat{Q}^\pi$-value) pairs. Here, $\hat{Q}^\pi$ is the Monte Carlo estimate for the $Q$-value at the prefix, under the policy $\pi$. on which we finetune the Gemma 9B model with cross-entropy loss. Since the distribution of values for $\hat{Q}^\pi$ can be skewed, we split the range of $\hat{Q}^\pi$-values into two buckets, based on which we also partition the training data. The first bucket is the set of all prefixes with $\hat{Q}^\pi < 0.5$ and the second is the set of all prefixes with $\hat{Q}^\pi \geq 0.5$. Then, we use class-balanced sampling over these buckets to finetune the pretrained model for 20000 training iterations, using a batch size of $32$. We use an Adam optimizer with a maximum learning rate of $5e-7$. We use a linear warm up (till 2000 steps), followed by a cosine decay learning rate schedule to train the models. Since  a pretrained LLM would output a matrix of logits (vocabulary size $\times$ sequence length) we fix a token as the ``scoring token'' to be the end of the sequence / prefix that needs to be scored. The logits of this scoring token are then used to determine the prediction for the LLM being trained. 

Once we have models that predict the $Q^\pi$ for a base policy $\pi$, we compute the $Q$-value under the $\mathrm{BoK}$ policy corresponding to $\pi$, by setting: $Q^{\mathrm{BoK}(\pi)}(\bs, a) = 1-(1-Q^\pi(\bs, a))^{K}$. Next, given the $Q$-values we for $\pi$, and their corresponding best-of-$K$ policies, we can compute any effective reward in \Eqref{eq:prm_rl_grad}, using the definition of advantage value of a step in \Eqref{eq:advantage}.

\begin{figure}[!ht]
    \centering
    \includegraphics[width=0.45\linewidth]{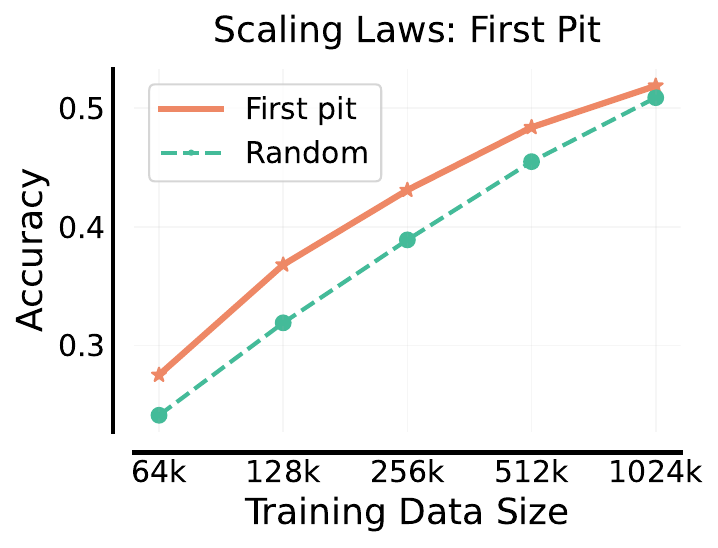}
    \caption{\textbf{\emph{First pit strategy from \citet{luo2024improve,setlur2024rl}}}: 
    We compare the beam search performance (with beam size 128) for a PAV trained on data collected using two types of seed rollouts. For the seed rollouts, we either randomly sample from the distribution of state action pairs induced by the base policy. Or we improve coverage particularly by using the ''first pit'' strategy of identifying the first state with low $Q$ values on a partial rollout that starts from a high $Q$-value state and ends with an outcome reward of $0$, i.e., $Q$ value is 0. 
    } 
    \label{fig:first_pit}
\end{figure}

\section{Additional: Experiments on RL Training with PAVs}
\label{app:rl_with_pavs_additional}

\textbf{Training details.} As discussed in Section~\ref{sec:prms_for_RL}, the initialization for RL training is the RFT (rejection finetuned) checkpoint for the corresponding base policies. More specifically, we consider two base policies Gemma 2B SFT, and Gemma 9B SFT, where the RL training is initialized with the policy obtained by further optimizing the base policy via rejection finetuning (RFT)~\cite{yuan2023scaling}. This is done to improve coverage over states and actions during the initial iterations of RL training. 
For rejection finetuning we train the SFT model (base policy) on all correct trajectories sampled when collecting seed rollouts for training PAVs (that predict the advantage of the same base policy). The training details for RFT remain same as SFT and are detailed in Appendix~\ref{app:test_time_scaling_additional}. 
We use the REINFORCE~\citep{sutton1999policy} algorithm to improve the base policy.
The RL training is run for 10000 iterations for the 2B model, and for 5000 iterations for the 9B model. For both we use the Adam optimizer with a learning rate of 1e-7, and a batchsize of 32. The maximum response length is set to 512. For both we used a learning rate schedule with a linear warm up for 10\% of the total training iterations, followed by a cosine decay. The  implementation of our policy gradient algorithm also uses a token-level value function that is initialized to be the base policy itself, and is trained with a square loss. The value function is only used as a baseline during RL training, i.e., at any state $\bs$ it is only predicting $\Exp_{a \sim \pi(\cdot \mid \bs)} Q^\pi(\bs, a) + \alpha A^\mu(\bs, a)$. 

Most importantly, we use a validation set to identify a good choice of $\alpha$ in the effective reward (in \Eqref{eq:prm_rl_grad}). For Gemma 2B this value is $5.0$ and for the 9B policy $\alpha=3.0$ works best. 
Similar to the choice of $\alpha$ for test-time search, we find that most values of $\alpha$ in the range of $0.5$ to $6.0$ improved performance over ORM-RL, to different degrees.
Both policies are also optimized with KL regularization, against the initial iterate of the RL policy, where the strength of the KL penalty is set to 0.001 for both.

\section{Theoretical Analysis:  Complementary Provers Amplify Base Policy Improvement}
\label{app:theory-convergence}

In this section, we present the proofs for our theoretical results in the main paper (\Secref{subsec:theory}). We begin by describing some notation we use in our proofs,  the natural policy gradient algorithm we use for the  policy update, followed by the proof for Theorem~\ref{thm:policy-improvement}. We also present a simple application of this result in Proposition~\ref{prp:learning-signal}. Our results in this section are in the tabular setting, with softmax parameterization of the policies. Note that for the deterministic Markov decision process induced by the LLM, we are indeed in a tabular setting, where the set of states and actions is discrete, but large and grows exponentially in sequence length. 

\textbf{Notation and preliminaries.} We use $d^\pi_{h}, d^\mu_{h}$ to denote the distribution over states $\bs_h$ at time step $h$, starting from the initial state distribution given by the empirical distribution over the questions in the dataset $\mathcal{D}$, and following the base policy $\pi$, or prover policy $\mu$ respectively. 
The term $d^\pi_{\bs}$ denotes the distribution over future states, starting from state $\bs$, and following policy $\pi$. Here, $\bs$ can be a state at any time $h \in [0, \ldots, H]$. For convenience, we overload the notation $d^\pi_{\bs}$ (the distribution over future states induced by a policy starting from state $\bs$), 
and use $d^\pi_{\rho}$ to denote the mixture distribution over  $d^\pi_{\bs}$ starting from a random state $\bs$ drawn from $\bs \sim \rho$, and following policy $\pi$.

The term $Q^\pi(\bs_h, a_h)$ refers to the value of state-action pair $\bs_h, a_h$, \textit{i.e.}, the expected return in the future, starting the policy from state $\bs_h$ and taking action $a_h$:
\begin{align}
    \label{eq:q-function}
    Q^{\pi}({\bs_h}, {a_{h}}) \coloneqq{
    {\Exp}_{
    {a_{h}, \ldots, a_H \sim {\pi}(\cdot|\bs_{h}, a_h)}
    } \Big[ \mathrm{Rex}\left( (a_1, \ldots, a_H), \by^{\star}_{\bx}\right)\Big]}.
\end{align}
Note that $\by^\star_{\bx}$ is known on the dataset $\mathcal{D}$, and state $\bs_h$ contains the question $\bx$ as part of it.
Similarly, we can define the value function $V^{\pi}({\bs_h})$ of a state $\bs_h$ as:
\begin{align}
    \label{eq:v-function}
    V^{\pi}({\bs_h}) \coloneqq{
    {\Exp}_{
    {a_{h+1} \sim {\pi}(\cdot|\bs_{h})}
    } Q^\pi(\bs_h, a_h)}.
\end{align}
The advantage function is then given by:
\begin{align}
    \label{eq:a-function}
    A^\pi(\bs_h, a_h) \coloneqq Q^\pi(\bs_h, a_h) - V^\pi(\bs_h).
\end{align}
The policy gradient algorithm we use to update the base policy iteratively is natural policy gradient~\citep{kakade2001natural}, and we use $\pi_t$ to refer to the base policy iterate at time $t$ of this iterative algorithm. Finally, we use $\mathcal{S}$ to denote the set of all states (prefixes) and $\mathcal{A}$ for the set of all actions (steps) that the LLM can take at any state.

\textbf{Parameterization of the base policy.} We adopt the softmax parameterization for the base policy: 
\begin{align}
    \pi_\btheta(a\mid \bs_h) = \frac{\exp(\btheta_{\bs_h,a})}{\sum_{a' \in \mathcal{A}} \exp(\btheta_{\bs_h,a'})}.
\end{align}
Here $\btheta_{\bs_h, a} \in \mathbf{\Theta} \subseteq \R^d$ controls the probability of taking action $a$ at state $\bs_h$. The full set of parameters across all states and actions is denoted by $\btheta \in \R^{d \times |\mathcal{S}| \times |\mathcal{A}|}$. Whenever clear from context, we overload the notation $\pi_t$ to denote both the policy at iterate $t$, \textit{i.e.},  $\pi_{\btheta_t}$ and the parameter $\btheta_t$ itself. \textit{E.g.}, the gradient operator $\nabla_{\pi_t}[\cdot]$ is referring to $\nabla_{\btheta_t}[\cdot]$.

\textbf{Defining policy improvement.} Let $\rho$ be a distribution over all states $\{\bs_h :  h \in [0, 1, \ldots, H]\}$, then $\Exp_{\bs \sim \rho} V^\pi(\bs)$, and $\Exp_{\bs \sim \rho} V^\mu(\bs)$ give us the expected value functions over states across time steps, measured under $\rho$, for policies $\pi$ and $\mu$ respectively. We assume that $d^\pi_h $ and $d^\mu_h$ are both absolutely continuous with respect to $\rho$, and use the expected value function over $\rho$ as the quantity we track before and after a policy update. 
A positive change in $\Exp_{\bs \sim \rho} V^\pi(\bs)$ implies a net positive improvement in the base policy. Thus, progress is made at each update of the policy when:
\begin{align*}
    \Exp_{\bs \sim \rho} V^{\pi_{t+1}}(\bs) \; - \;  \Exp_{\bs \sim \rho}  V^{\pi_{t}}(\bs) > 0.
\end{align*}

\subsection{Natural Policy Gradient}
\label{subsec:bkgnd-natural-policy-gradient}

The natural policy gradient (NPG) algorithm~\citep{Kakade2001} defines a Fisher information matrix
(induced by the policy), and performs gradient updates in the geometry
induced by the following matrix:
\begin{align}\label{eqn:npg-1}
  F_\rho(\pi) &=\Exp_{\bs \sim d^{\pi}_{\rho}}
\E_{a\sim \pi(\cdot \mid \bs) }\Big[ \nabla_\pi \log
\pi(a \mid \bs) \Big(\nabla_\pi \log \pi(a\mid
\bs)\Big)^\top \Big] 
\end{align}
Typically, the NPG update does gradient updates on the objective $\ell_{\mathrm{ORM-RL}}$ in \Eqref{eq:standard_rl}, but in our case, the objective of interest is $\ell_{\mathrm{PAV-RL}}$ in \Eqref{eq:prm_rl}, and thus the natural gradient is given by:
\begin{align}
    \label{eq:-npg-2}
    \pi_{t+1} &= \pi_{t} + \gamma \cdot F_\rho(\pi^{t})^\dagger \paren{\nabla_\pi \ell_{\mathrm{PAV-RL}}(\pi) \Big\vert_{\pi = \pi_t}} ,
\end{align}
where $M ^\dagger$ denotes the Moore-Penrose pseudoinverse of the
matrix $M$.  
We restrict to using the initial state distribution $\rho$ in our update rule, \textit{i.e.}, we restrict attention to states
$\bs$ reachable from $\rho$, since $\rho$ governs the performance measure of interest when evaluating the expected value of a policy. Thus, without loss of generality,
we can exclude states that are not reachable under $\rho$. Specifically, we restrict the MDP to the set:
  $\{\bs_h~:~\exists \pi ~~\text{such that}~~ d^\pi_\rho(\bs_h) >
  0, h \in [0, \ldots, H]\}$. The scalar $\gamma > 0$ determines the learning rate.

\subsection{Useful Lemmas}
\label{subsec:useful-lemmas}

\begin{lemma}
\label{lem:pdl}
[The performance difference lemma;~\citep{kakade2002approximately}]
For all policies $\pi, \pi^\prime$ and states
$s_0$,
\begin{eqnarray*}
  V^\pi(\bs) - V^{\pi^\prime}(\bs)
&=& \Exp_{\bs_h \sim d_{\bs}^\pi }\Exp_{a_h\sim \pi(\cdot\mid \bs_h) }
\left[A^{\pi^\prime}(\bs_h,a_h)\right].
\end{eqnarray*}
\end{lemma}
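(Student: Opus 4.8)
The plan is to prove Lemma~\ref{lem:pdl} by the classical telescoping (``add and subtract'') argument of \citet{kakade2002approximately}, specialized to our finite-horizon MDP with deterministic dynamics and a single terminal reward $\Rex$. Fix policies $\pi,\pi'$ and a state $\bs=\bs_h$ at time step $h$. Let $\traj=(\bs_h,a_h,\bs_{h+1},a_{h+1},\ldots,\bs_H,a_H)$ be a trajectory obtained by rolling out $\pi$ from $\bs_h$, and write $R(\traj)=\Rex((a_1,\ldots,a_H),\by^{\star}_{\bx})$ for its (terminal) return; by definition $V^\pi(\bs_h)=\Exp_{\traj\sim\pi\mid\bs_h}[R(\traj)]$. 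The whole proof then reduces to one line of telescoping followed by the tower property.

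First I would use that, since there are no intermediate rewards, the Bellman relation reads $Q^{\pi'}(\bs_t,a_t)=\Exp[V^{\pi'}(\bs_{t+1})\mid \bs_t,a_t]$ for $t\in\{h,\ldots,H\}$, adopting the convention that the terminal state reached after $a_H$ has value $R(\traj)$, i.e.\ $V^{\pi'}(\bs_{H+1}):=R(\traj)$. Along a fixed trajectory this gives the \emph{telescoping} identity
\begin{align*}
R(\traj)-V^{\pi'}(\bs_h)=\sum_{t=h}^{H}\bigl(V^{\pi'}(\bs_{t+1})-V^{\pi'}(\bs_t)\bigr).
\end{align*}
Taking the conditional expectation of the $t$-th summand over $a_t\sim\pi(\cdot\mid\bs_t)$ and the (deterministic) transition, and using the Bellman relation above, turns it into $\Exp_{a_t\sim\pi(\cdot\mid\bs_t)}[Q^{\pi'}(\bs_t,a_t)-V^{\pi'}(\bs_t)]=\Exp_{a_t\sim\pi(\cdot\mid\bs_t)}[A^{\pi'}(\bs_t,a_t)]$, by the definition of the advantage in \Eqref{eq:a-function}.

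Next I would take the outer expectation over $\traj\sim\pi$ started at $\bs_h$. The left-hand side becomes $\Exp_{\traj\sim\pi\mid\bs_h}[R(\traj)]-V^{\pi'}(\bs_h)=V^\pi(\bs_h)-V^{\pi'}(\bs_h)$, while the right-hand side becomes $\sum_{t=h}^{H}\Exp_{\bs_t\sim d^\pi_{t\mid\bs_h}}\Exp_{a_t\sim\pi(\cdot\mid\bs_t)}[A^{\pi'}(\bs_t,a_t)]$, where $d^\pi_{t\mid\bs_h}$ denotes the law of the time-$t$ state under $\pi$ started from $\bs_h$. Folding the sum over $t$ into the single future-state visitation measure $d^\pi_{\bs_h}=\sum_{t=h}^{H}d^\pi_{t\mid\bs_h}$ (the notation used in Appendix~\ref{app:theory-convergence}) yields exactly the claimed identity.

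I do not expect a genuine obstacle here: the argument is pure bookkeeping. The only points needing care are (i) the horizon boundary --- treating the post-$a_H$ state as terminal with value $R(\traj)$ so the telescope closes cleanly --- and (ii) matching the (unnormalized) future-state measure $d^\pi_{\bs}$ appearing in the lemma with the sum of per-time-step state laws produced by the telescoping sum. Both are conventions already fixed in Section~\ref{sec:prelim} and the notation paragraph of Appendix~\ref{app:theory-convergence}, so after those are pinned down the proof is precisely the display above.
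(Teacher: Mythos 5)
Your proof is correct and is exactly the classical telescoping argument of Kakade and Langford (2002); the paper does not reproduce a proof at all but simply defers to that reference, so your argument coincides with the one the paper relies on. The two bookkeeping points you flag (treating the post-$a_H$ state as terminal with value equal to the outcome reward, and reading $d^\pi_{\bs}$ as the sum of the per-step state laws) are indeed the only conventions that need to be fixed, and you handle them correctly.
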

\begin{proof}
See proof of Lemma 6.1 in \citet{kakade2002approximately}.
\end{proof}

\begin{lemma}[Natural policy gradient update]
For the natural policy gradient in \Eqref{eq:-npg-2}, the corresponding policy update is given by:
\label{lem:npg-update-derivation}
\begin{align}
    \pi^{t+1}(a \mid \bs) &= \pi^{t}(a \mid \bs) \cdot \frac{\exp\paren{\gamma \cdot \paren{{Q^{{t}}(\bs, a) + \alpha \cdot A^\mu(\bs, a) }}}}{Z^{t}(\bs)}, \\
    Z^{t}(\bs) &=   \gamma \cdot \sum_{a \in \mathcal{A}}  \paren{{Q^{t}(\bs, a) + \alpha \cdot A^\mu(\bs, a)} }
\end{align}
\end{lemma}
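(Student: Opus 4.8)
The plan is to reproduce the standard softmax natural-policy-gradient calculation of \citet{agarwal2021theory,kakade2002approximately}, carrying the extra $\alpha A^\mu$ term through the algebra. Write $\wt Q^t(\bs,a) := Q^{\pi_t}(\bs,a) + \alpha A^\mu(\bs,a)$ for the \emph{effective} state--action value appearing as the per-step reward in \eqref{eq:prm_rl_grad}, and set $\wt V^t(\bs) := \Exp_{a\sim\pi_t(\cdot\mid\bs)}[\wt Q^t(\bs,a)]$ and $\wt A^t(\bs,a) := \wt Q^t(\bs,a) - \wt V^t(\bs)$. First I would compute the Euclidean gradient $\nabla_\btheta \ell_{\mathrm{PAV-RL}}$ blockwise. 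The softmax parameterization gives, for $\bs=\bs'$, $\nabla_{\btheta_{\bs',a'}}\log\pi_\btheta(a\mid\bs) = \mathds{1}[a=a'] - \pi_\btheta(a'\mid\bs)$, and $0$ when $\bs\neq\bs'$; combining this with \eqref{eq:prm_rl_grad} (after taking the expectation over trajectories drawn from $\pi_t$), for every $\rho$-reachable state $\bs$:
\begin{align*}
  \paren{\nabla_\btheta \ell_{\mathrm{PAV-RL}}(\pi_\btheta)}_{\bs,a} \;=\; d^{\pi_t}_\rho(\bs)\,\pi_t(a\mid\bs)\,\wt A^t(\bs,a),
\end{align*}
since the $\mathds{1}[a=a']$ piece contributes $\pi_t(a\mid\bs)\wt Q^t(\bs,a)$ and the $-\pi_t(a'\mid\bs)$ piece contributes $-\pi_t(a\mid\bs)\wt V^t(\bs)$. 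For states not reachable from $\rho$ the gradient vanishes, and these are excluded by the restriction of the MDP discussed after \eqref{eq:-npg-2}.

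Next I would record that the Fisher matrix \eqref{eqn:npg-1} is block diagonal across states, the $\bs$-block being $d^{\pi_t}_\rho(\bs)\paren{\mathrm{diag}(\pi_t(\cdot\mid\bs)) - \pi_t(\cdot\mid\bs)\pi_t(\cdot\mid\bs)^{\top}}$, and then solve the system $F_\rho(\pi_t)\,w = \nabla_\btheta\ell_{\mathrm{PAV-RL}}$. The key identity is that for any vector $v$, $\paren{\mathrm{diag}(\pi_t) - \pi_t\pi_t^{\top}}v = \pi_t \odot \paren{v - \innerprod{\pi_t}{v}\vone}$, so taking $v = \wt Q^t(\bs,\cdot)$ produces exactly $\pi_t \odot \wt A^t(\bs,\cdot)$, matching the gradient block up to the scalar $d^{\pi_t}_\rho(\bs)$. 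Hence $w_{\bs,\cdot} = \wt Q^t(\bs,\cdot)$ — equivalently $\wt A^t(\bs,\cdot)$, which differs by the $a$-independent constant $\wt V^t(\bs)\vone$ — solves the system, and the pseudoinverse returns it modulo the null space of each block, which is precisely the span of the state-wise constant vector $\vone$. Substituting into \eqref{eq:-npg-2}, the NPG update reads $\btheta_{t+1,\bs,a} = \btheta_{t,\bs,a} + \gamma\paren{\wt Q^t(\bs,a) + c(\bs)}$ for some irrelevant constants $c(\bs)$.

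Finally I would exponentiate and renormalize: since $\pi_{t+1}(a\mid\bs) \propto \exp(\btheta_{t+1,\bs,a}) \propto \exp(\btheta_{t,\bs,a})\exp\paren{\gamma\wt Q^t(\bs,a)} \propto \pi_t(a\mid\bs)\exp\paren{\gamma\paren{Q^{\pi_t}(\bs,a) + \alpha A^\mu(\bs,a)}}$ — the factors $\exp(\gamma c(\bs))$ and the softmax partition function of $\btheta_t$ being constant in $a$ and absorbed into the normalizer — we get the update in the statement, with $Z^t(\bs) = \sum_{a\in\gA}\pi_t(a\mid\bs)\exp\paren{\gamma\paren{Q^{\pi_t}(\bs,a) + \alpha A^\mu(\bs,a)}}$. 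The main obstacle is the pseudoinverse step: rigorously identifying the null space of each Fisher block and verifying that $\wt Q^t$ (or $\wt A^t$) is a genuine solution of the linear system rather than a mere candidate — this is the softmax-NPG lemma of \citet{agarwal2021theory}, and the only extra bookkeeping here is carrying $\alpha A^\mu$ through, and noting that the finite-horizon objective \eqref{eq:prm_rl} makes the relevant occupancy $\sum_h d^{\pi_t}_h$, which agrees with $d^{\pi_t}_\rho$ up to a positive constant that can be folded into the learning rate $\gamma$.
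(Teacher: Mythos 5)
Your proposal is correct and follows essentially the same route as the paper's proof: both reduce to the standard softmax-NPG computation of \citet{agarwal2021theory}, with the paper phrasing the pseudoinverse step via compatible function approximation (the least-squares loss $L^\pi(\bw)$) while you solve the block-diagonal Fisher linear system directly — these are the same normal equations, and both arguments conclude by observing that the solution is determined only up to a state-dependent offset that cancels in the softmax normalization. As a minor bonus, the normalizer you derive, $Z^{t}(\bs)=\sum_{a\in\mathcal{A}} \pi_t(a\mid \bs)\exp\paren{\gamma\paren{Q^{\pi_t}(\bs,a)+\alpha A^\mu(\bs,a)}}$, is the correct one (and the one the paper implicitly uses later when it writes $\log Z^t(\bs)$ as a log-expectation under $\pi_t$), whereas the expression for $Z^{t}(\bs)$ displayed in the lemma statement itself appears to be a typo.
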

\begin{proof}
We use arguments similar to the proof of Lemma 15 in \cite{agarwal2021theory}, with the key difference of separately accounting for the term $A^\mu(\bs, a)$ in the effective reward. For the sake of completeness we reproduce some of the derivation, accounting for the $A^\mu$ term in the process. We follow compatible function approximation
in~\citet{sutton1999policy} and ~\citet{Kakade2001}. For a vector $\bw \in \R^{d \times |\mathcal{S}||\mathcal{A}|}$, we define the error function
\begin{align}
\label{eq:proof-20}
      L^\pi(\bw) = \Exp_{\bs\sim d^{\pi}_\rho}, \Exp_{a\sim \pi(\cdot \mid \bs)}\brck{\bw^\top \nabla_\pi \log \pi(\cdot \mid \bs) - \paren{A^{\pi}(\bs,a) + \alpha A^{\mu}(\bs,a)- \alpha \E_{a\sim\pi_t(\cdot \mid \bs)} A^\mu(s, a)}}^2.
\end{align}

Let $\bw^\star$ be the minimizer of $L^\pi(\bw)$ with the
smallest $\ell_2$ norm. Then by definition of Moore-Penrose
pseudoinverse:
\begin{align}\label{eq:proof-21}
    \bw^\star &= F_\rho(\pi)^\dagger \E_{\bs\sim d^{\pi}_\rho, a\sim \pi(a | \bs)} \brck{\nabla_\pi \log \pi(a|s) \paren{A^{\pi}(\bs,a) + \alpha A^{\mu}(\bs,a)- \alpha \E_{a\sim\pi_t(\cdot \mid \bs)} A^\mu(s, a}} \nonumber \\
    &=  F_\rho(\pi)^\dagger \nabla_\pi \ell_{\mathrm{PAV-RL}}(\pi).
\end{align}
In other words, $\bw^\star$ is precisely proportional to the NPG update direction. Note further that for the Softmax policy parameterization, we have:
\[
    \bw^\top \nabla \log \pi(a|s) = \bw_{s,a} - \sum_{a'\in \mathcal{A}} \bw_{s,a'} \pi(a'|s).
\]
Since $\sum_{a\in \mathcal{A}} \pi(a|s) A^{\pi}(s,a) = 0$, this immediately
yields that: $$L^\pi(A^{\pi}(\bs, a) + \alpha A^\mu(\bs, a)) = 0.$$ However, this might not be
the unique minimizer of $L^\pi$, which is problematic since
$\bw^\star(\pi)$ as defined in terms of the Moore-Penrose
pseudoinverse is formally the smallest norm solution to the
least-squares problem, which $A^{\pi} + \alpha A^\mu $ may not be. However,
given any vector $\bv\in \R^{ |\mathcal{S}| \times |\mathcal{A}|}$, let us consider solutions of the form $A^{\pi} + \alpha A^\mu  + \bv$. Due to the form of the derivatives of the policy for the softmax parameterization, we have for any state $\bs,a$ such that $\bs$ is reachable under $\rho$,
\[
\bv^\top \nabla_\pi \log \pi(a\mid\bs) = \sum_{a'\in\mathcal{A}} (\bv_{\bs,a'}\mathbf{1}[a = a'] - \bv_{\bs,a'}\pi(a'\mid\bs)) = \bv_{\bs,a} - \sum_{a'\in\mathcal{A}} \bv_{\bs,a'}\pi(a'\mid \bs).
\]
 This is because $\pi$ is a stochastic policy with $\pi(a\mid \bs) > 0$ for all actions $a$ in each state $\bs$, so that if a state is reachable under $\rho$, it will also be reachable using $\pi$, and hence the zero derivative conditions apply at each reachable state.
For $A^{\pi} + \alpha A^\mu + \bv$ to minimize $L^\pi$, we would like $v^\top \nabla_\pi \log \pi(a\mid \bs) = 0$ for all $\bs,a$ so that $\bv_{\bs,a}$ is independent of the action and can be written as a constant $c_{\bs}$ for each $\bs$ by the above equality. Hence, the minimizer of $L^\pi(\bw)$ is determined up to a state-dependent offset, and
\[
    F_\rho(\theta)^\dagger \nabla_\pi \ell_{PAV-RL}(\pi) = {Q^{\pi}} + \alpha A^\mu + \bv,
\]
where $\bv_{\bs,a} = c_{\bs}$ for some $c_{\bs}\in \R$ for each state $\bs$ and action $a$. Finally, we observe that this yields the updates
\[
\btheta_{t+1} = \btheta^{t} + \gamma ({Q^{\pi}} + \alpha A^\mu + \bv) \quad \mbox{and}\quad \pi_{t+1}(a \mid \bs) = \pi_{t}(a \mid \bs) \frac{\exp(\gamma Q^{t}(s,a) + \gamma \alpha A^{\mu}(s,a))}{Z^t(\bs)}.
\]
Owing to the normalization factor $Z^t(\bs)$, the state dependent offsets cancel in the updates for $\pi$, which yields the statement of the lemma.
\end{proof}

\subsection{Proof of Theorem~\ref{thm:policy-improvement}}
\label{subsec:thm-polic-improv-proof}

\begin{proof}

For some notational convenience, we use $V^t, V^{t+1}$,  to denote the value functions  $V^{\pi_{t}}, V^{\pi_{t+1}}$ for policies at $\pi_t$, $\pi_{t+1}$ at time $t$ and $t+1$ respectively. Similarly, we use $ A^{t}, A^{t+1}$ for $A^{\pi_{t}}, A^{\pi_{t+1}}$ respectively. For the distribution over states $d^{\pi_{t+1}}_\rho$ induced by the policy $\pi_{t+1}$, starting from an initial distribution of states given by $\rho$, we simplify the notation and use $d^{t+1}_\rho$. Similarly we use $d^{t}_\rho$ for $d^{\pi_{t}}_\rho$.

Next, for simplicity we set $\alpha=1$ in the natural policy gradient update in Lemma~\ref{lem:npg-update-derivation}. It is easy to see that the lower bound result we show holds for any value of $\alpha >0$, and the term in the lower bound scales linearly with $\alpha$. Note, that this is not a free variable, and $\alpha$ has to be $O(1)$, since as we increase the value of $\alpha$ we would have to correspondingly reduce $\gamma$ for our result to hold. For now, we fix $\alpha=1$.

From the policy difference Lemma~\ref{lem:pdl}, we can write: 
\begin{align}
    \label{eq:proof-1}
    \E_{\bs\sim \rho} V^{t+1}(\bs) -  \E_{\bs\sim \rho} V^{t}(\bs) & = \E_{\bs \sim d^{t+1}_\rho} \E_{a \sim \pi^{t+1}(a \mid \bs)} \brck{ A^{t}(\bs, a) }
\end{align}
Next, from the natural policy gradient update in Lemma~\ref{lem:npg-update-derivation}, we can write $A^{t+1}(\bs, a)$ as:
\begin{align}
    \label{eq:proof-2}
    A^{t} (\bs, a) =  \frac{1}{\gamma} \cdot \log \paren{\frac{\pi^{t+1} (a\mid \bs) \cdot Z^t(\bs, a)}{\pi^t(a \mid \bs)}} - A^\mu(\bs, a)
\end{align}
Substituting \Eqref{eq:proof-2} in \Eqref{eq:proof-1} we get:
\begin{align}
    \label{eq:proof-3}
    \E_{\bs\sim \rho} V^{t+1}(\bs) & -  \E_{\bs\sim \rho} V^{t}(\bs) =  \frac{1}{\gamma} \E_{\bs \sim d^{t+1}_\rho} \brck{\mathrm{KL}(\pi^{t+1}(\cdot \mid \bs)) \| \pi^{t}(\cdot \mid \bs))} \nonumber\\ 
    &+ \frac{1}{\gamma} \log Z^t(\bs) - \E_{\bs \sim d^{t+1}_\rho} \E_{a \sim \pi^{t+1}(a \mid \bs)} A^\mu(\bs, a). 
\end{align}
Recall that for $\alpha=1$,
\begin{align}
    \label{eq:proof-4}
    \log  Z^t(\bs) = \log  \E_{a\sim \pi^{t}(\cdot \mid \bs)} \exp\paren{\gamma \cdot ({A^{{t}}(\bs, a) +  A^\mu(\bs, a)})}
\end{align}
Applying Jensen's inequality we get:
\begin{align}
    \label{eq:proof-5}
    \log  Z^t(\bs) &\geq      {\gamma \cdot \E_{a\sim \pi^{t}(\cdot \mid \bs)} \brck{{A^{{t}}(\bs, a) +  A^\mu(\bs, a)}}} \\ 
      &=   \gamma \cdot   \E_{a \sim \pi^{t}(\cdot \mid \bs)} \brck{A^\mu(\bs, a)} ,
\end{align}
since $\E_{\pi^t} \brck{ A^{{t}}(\bs, a) } = 0$. Note that in \Eqref{eq:proof-4} the KL term is always non-negative. Thus, we can lower bound our policy improvement:
\begin{align}
    \label{eq:proof-6}
    \E_{\bs\sim \rho} V^{t+1}(\bs) & -  \E_{\bs\sim \rho} V^{t}(\bs) \geq \E_{\bs \sim d^{t+1}_\rho} \langle \pi^{t+1} (\cdot \mid \bs ) - \pi^t(\cdot \mid \bs), A^\mu(\bs, \cdot) \rangle,
\end{align}
where the inner product is the standard euclidean product as our actions space $\mathcal{A}$ is discrete. 

In the following we will treat the distribution $\pi^{t+1}(\cdot \mid \bs)$ as a vector denoted by $\pi^{}$ Next, from the NPG update we know that:
\begin{align}
    \label{eq:proof-7}
    \pi^{t+1}(a\mid \bs)-\pi^{t}(a\mid \bs) = \pi^t(a\mid \bs) \paren{\frac{\exp{\paren{\gamma A^t(\bs, a) + \gamma A^\mu(\bs, a)}}}{Z^t(\bs)}-1}
\end{align}

We note that for $\gamma \ll 1$, $\exp{\gamma(A^t(\bs, a)) + \gamma(A^\mu(\bs, a))} = \Theta(1+ \gamma (A^t(\bs, a) + A^\mu(\bs, a)))$, where the terms that grow as $\omega(\gamma)$ are being ignored. 
Based on this, for $\gamma \ll 1$, $\exists$ constants $0 < C_1 < C_2$ such that:
\begin{align*}
    \exp(\gamma A^t(s,a) + \gamma A^\mu(s,a)) - 1 \in [C_1 \gamma(A^t(s,a) + A^\mu(s,a)), C_2 \gamma(A^t(s,a) + A^\mu(s,a))]
\end{align*}
Applying the above claim in \Eqref{eq:proof-7} gives us:
\begin{align}
    &\pi^{t+1}(a\mid \bs)-\pi^{t}(a\mid \bs) \geq  \pi^t(a\mid \bs) \paren{\frac{1+C_1 \gamma (A^t(\bs, a) + A^\mu(\bs, a))}{1+C_2\gamma \E_{a\mid \pi^{t}(\cdot \mid \bs)} \brck{A^t(\bs, a) + A^\mu(\bs, a)} }-1} \nonumber \\
    &\geq C_3  \gamma  \frac{ 
        \paren{\pi^t(a \mid \bs)\paren{A^t(\bs, a) + A^\mu(\bs, a)} - \pi_t(a\mid s) \E_{a\sim \pi_t(a\mid s)} \brck{A^t(\bs, a) + A^\mu(\bs, a)} } 
    }
    {
        1+ C_2\gamma  \E_{a\sim \pi_t(a\mid s)} \brck{A^t(\bs, a) + A^\mu(\bs, a)}
    } \\
     &= C_3  \gamma  \frac{ 
        \paren{\pi^t(a \mid \bs)\paren{A^t(\bs, a) + A^\mu(\bs, a)} - \pi_t(a\mid s) \E_{a\sim \pi_t(a\mid s)} \brck{ A^\mu(\bs, a)} } 
    }
    {
        1+ \gamma C_2 \E_{a\sim \pi_t(a\mid s)} \brck{A^t(\bs, a) + A^\mu(\bs, a)},
    }  \label{eq:proof-8}
\end{align}
where we reused: $\E_{\pi^t} \brck{ A^{{t}}(\bs, a) } = 0$. Here, $C_3 > 0$ is a constant. 

We now plug in the  above lower bound  into \Eqref{eq:proof-5} to get the final lower bound on the policy improvement in Theorem~\ref{thm:policy-improvement}. For this, we will once again use the assumption that the learning rate $\gamma \ll 1$, which allows us to use $1+ \gamma  \E_{a\sim \pi_t(a\mid s)} \brck{A^t(\bs, a) + A^\mu(\bs, a)} \geq C_4$ for some constant $C_4 > 0$. This is because, in our setting the range of the advantages is $[-1, 1]$.  
\begin{align}
    \E_{\bs\sim \rho}\brck{ V^{t+1}(\bs) -  V^{t}(\bs)} &\gsim \gamma {
    \E_{\bs\sim d^{t+1}_\rho} 
    \brck{
        \E_{a\sim \pi^t(a \mid \bs)} \brck{A^\mu(\bs, a)  A^t(\bs,a )}
        } 
    }  \nonumber\\
    +& \gamma {
    \E_{\bs\sim d^{t+1}_\rho} 
    \brck{
        \E_{a\sim \pi^t(a \mid \bs)} \brck{\paren{{A^\mu}(\bs, a)}^2}
        } 
    }  \nonumber \\
    -& \gamma {
    \E_{\bs\sim d^{t+1}_\rho} 
    \brck{
        \paren{\E_{a\sim \pi^t(a \mid \bs)} \brck{A^\mu(\bs, a)}}^2
        } 
    }  \label{eq:proof-9}
\end{align}
Now \Eqref{eq:proof-9} gives us,
\begin{align}
    \label{eq:proof-10}
    \E_{\bs\sim \rho}\brck{ V^{t+1}(\bs) -  V^{t}(\bs)} &\gsim
    \gamma   \E_{\bs\sim d^{t+1}_\rho}  \brck{\mathbb{V}_{a\sim \pi_t(a\mid \bs)} \brck{A^\mu(\bs, a)} - \E_{a\sim \pi_t(a\mid \bs)} \brck{A^\mu(\bs, a) A^t(\bs, a)} }
\end{align}

Now, for the last step we note that $d^{t+1}_\rho$ is component wise larger than $\rho$, and this gives us the final result:
\begin{align}
    \label{eq:proof-11}
    \E_{\bs\sim \rho}\brck{ V^{t+1}(\bs) -  V^{t}(\bs)} &\gsim
    \gamma   \E_{\bs\sim \rho}  \brck{\mathbb{V}_{a\sim \pi_t(a\mid \bs)} \brck{A^\mu(\bs, a)} - \E_{a\sim \pi_t(a\mid \bs)} \brck{A^\mu(\bs, a) A^t(\bs, a)} }
\end{align}\end{proof}

\subsection{Discussion on Remark~\ref{rem:bok-result}}
\label{subsec:remark-additional}

First, we note that if the $Q$-value of a base policy $\pi$ at state, action pair $(\bs, a)$ is $Q^\pi(\bs, a)$, then for the prover $\mu$ set to the best-of-K policy $\mathrm{BoK}(\pi)$, the $Q$-value at the same state, action pair is:
\begin{align}
    \label{eq:proof-25}
    Q^\mu(\bs, a) = 1-(1-Q^\pi(\bs, a))^K
\end{align}

This is because, in our setup the final outcome of correctness given by $\Rex$ is a random variable taking values in $\{0, 1\}$, with expectation $Q^\pi(\bs, a)$, when completing a rollout starting from prefix $(\bs, a)$. Thus, we can treat the outcome of function $\Rex$ as a Bernoulli random variable. Next, we can simply compute the probability of sampling a single correct answer out of $K$ attempts, which is also a Bernoulli random variable with mean given by $Q^\mu(\bs, a)$ in \Eqref{eq:proof-25}.   

Now for Remark~\ref{rem:bok-result}, we observe that whenever $Q^\pi(\bs, a) \ll 1$, \textit{e.g.}, when $Q^\pi(\bs, a) = O(1/K)$, for all values of $(\bs, a)$, we can do the Taylor approximation of $Q^\mu(\bs, a)$ around $0$, and note that $Q^\mu(\bs, a) = \Theta (K \cdot Q^\pi(\bs, a))$.
Next, note the following calculation for the first term (distinguishability) in Theorem~\ref{thm:policy-improvement}:
\begin{align*}
    \mathbb{V}_{\pi} A^\mu(\bs, a) &= \mathbb{V}_{\pi} Q^\mu(\bs, a) \\
    &= \mathbb{V}_{\pi} \brck{1-(1-Q^\pi(\bs, a))^K} \\
    &=  \mathbb{V}_{\pi} \brck{(1-Q^\pi(\bs, a))^K}
\end{align*}
This means that the first term in Theorem~\ref{thm:policy-improvement} which measures ``distinguishability'' now increases by a factor of $K^2$. Similarly, we can see that the the term which measures ``misalignment'' can change in magnitude by atmost a factor of $O(K)$, since the misalignment term is linear in $Q^\mu$. These two observations combined lead us to the conclusion  in Remark~\ref{rem:bok-result}.

\subsection{Improving a Stronger Base policy with a Weaker Prover Policy}
\label{subsec:prp-learning-signal-proof}

In Proposition~\ref{prp:learning-signal}, we 
consider the case where the $\pi$ and $\mu$ differ in performance, as measured under the distribution of states $\rho$ in the following way:
$$\E_{\bs\sim \rho} [| V^\mu(\bs) - V^{\pi_t}(\bs)|] = \eta.$$ 
Next, whenever the prover's preference over actions is complementary to the base policy, by a factor of $\eta$,  i.e., 
$$\E_{\bs\sim \rho} \Exp_{a\sim \pi}[
\abs{Q^\mu(\bs, a)- Q^{\pi_t}(\bs, a)}] = \Theta(\eta),$$
then the variance of $A^\mu$ or $A^{\pi_t}$ under $\pi_t$ should scale as $\eta^2$.

Thus, we see that when $\pi_t$ fails to distinguish actions (\textit{i.e.}, $\E_{\bs\sim \rho}\mathbb{V}_{\pi_t}[A^{\pi_t}(\bs, a)]$ is small) regardless of the strength of prover policy $\mu$, as long as it is sufficiently complementary to $\pi_t$, the prover policy induces an improvement in base policy, that scales as $\eta^2$.

\begin{proposition}[Complementary $\mu$ boosts improvements in $\pi$]
\label{prp:learning-signal} 
Under the distribution over states given by $\rho$, let prover $\mu$ and base policy at iterate $t$, $\pi_t$, differ in absolute performance, i.e.,  $$\E_{\bs\sim \rho} [| V^\mu(\bs) - V^{\pi_t}(\bs)|]=\eta.$$ When $\E_{\bs \sim \rho} \mathbb{V}_{a\sim \pi_t}[A^{\pi_t}(\bs, a)] < \E_{\bs \sim \rho} \mathbb{V}_{a\sim \pi_t}[A^\mu(\bs, a)]$, and $\mu$ is complementary to $\pi_t$, i.e., $$\E_{\bs \sim \rho}\E_{a\sim \pi_t}|Q^{\pi_t}(\bs, a) - Q^\mu(\bs_h, a)| = \Theta(\eta),$$ then $\E_{\bs \sim \rho} [V^{\pi_{t+1}}(\bs)-V^{\pi_t}(\bs)] \gsim \eta^2$.
\end{proposition}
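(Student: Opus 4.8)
The plan is to push the two hypotheses through Theorem~\ref{thm:policy-improvement} and show its right-hand side is $\gsim \gamma\,\eta^2$, which is $\gsim \eta^2$ for the fixed small learning rate $\gamma$. Fix a state $\bs$ and write all expectations, variances and covariances over $a\sim\pi_t$. Set $D(\bs,a)\coloneqq Q^\mu(\bs,a)-Q^{\pi_t}(\bs,a)$, so that $A^\mu = A^{\pi_t} + D - \big(V^\mu(\bs)-V^{\pi_t}(\bs)\big)$ with the last term constant in $a$. Using $\Exp_{\pi_t}[A^{\pi_t}]=0$, a short expansion turns the per-state integrand of Theorem~\ref{thm:policy-improvement}, namely $\mathbb{V}_{\pi_t}[A^\mu(\bs,a)] + \Exp_{\pi_t}[A^\mu(\bs,a)A^{\pi_t}(\bs,a)]$, into the quadratic form $2\,\mathbb{V}_{\pi_t}[A^{\pi_t}] + 3\,\Cov_{\pi_t}(A^{\pi_t},D) + \mathbb{V}_{\pi_t}[D]$. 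The whole proof then consists of lower-bounding the $\rho$-average of this expression by $\eta^2$.

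First I would dispose of the cross term. By Cauchy--Schwarz, $|\Cov_{\pi_t}(A^{\pi_t},D)| \le \sigma_\pi\sigma_D$ where $\sigma_\pi\coloneqq\sqrt{\mathbb{V}_{\pi_t}[A^{\pi_t}]}$ and $\sigma_D\coloneqq\sqrt{\mathbb{V}_{\pi_t}[D]}$, so the integrand is at least $2\sigma_\pi^2-3\sigma_\pi\sigma_D+\sigma_D^2=(2\sigma_\pi-\sigma_D)(\sigma_\pi-\sigma_D)$, which is $\ge \tfrac38\sigma_D^2$ once $\sigma_\pi\le\sigma_D/4$. The variance-ordering hypothesis $\Exp_{\bs\sim\rho}\mathbb{V}_{\pi_t}[A^{\pi_t}]<\Exp_{\bs\sim\rho}\mathbb{V}_{\pi_t}[A^\mu]$ is exactly what makes $\sigma_\pi$ small relative to $\sigma_D$: since $\mathbb{V}_{\pi_t}[A^\mu]=\mathbb{V}_{\pi_t}[Q^\mu]$ and the $L^2(\pi_t)$ triangle inequality applied to $Q^\mu=Q^{\pi_t}+D$ shows $\mathbb{V}_{\pi_t}[A^\mu]$ and $\mathbb{V}_{\pi_t}[D]$ coincide up to an additive $O(\mathbb{V}_{\pi_t}[A^{\pi_t}])$, a constant-factor gap in the hypothesis forces $\Exp_\rho\sigma_\pi^2$ to be a small constant fraction of $\Exp_\rho\sigma_D^2$. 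A further Cauchy--Schwarz over $\bs\sim\rho$ (to make the per-state inequalities compatible with $\rho$-averages, using that $d^{t+1}_\rho$ dominates $\rho$ as in the proof of Theorem~\ref{thm:policy-improvement}) then reduces everything to showing $\Exp_{\bs\sim\rho}\mathbb{V}_{a\sim\pi_t}[D(\bs,a)] \gsim \eta^2$.

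For this final reduction I would use $\mathbb{V}_{\pi_t}[D]=\Exp_{\pi_t}[D^2]-(\Exp_{\pi_t}D)^2$ together with Jensen: the complementarity hypothesis $\Exp_{\bs\sim\rho}\Exp_{\pi_t}|D|=\Theta(\eta)$ gives $\Exp_\rho\Exp_{\pi_t}[D^2]\ge(\Exp_\rho\Exp_{\pi_t}|D|)^2=\Theta(\eta^2)$, so it remains to argue that the ``offset'' term $\Exp_\rho(\Exp_{\pi_t}D)^2$ does not swallow this $\eta^2$. This is where the performance-difference assumption $\Exp_{\bs\sim\rho}|V^\mu(\bs)-V^{\pi_t}(\bs)|=\eta$ is spent: it fixes the scale of the state-dependent shift $V^\mu-V^{\pi_t}$ and hence of $\Exp_{\pi_t}D$, putting it on the same $\eta$-scale as (rather than dominating) the action-wise spread $\sigma_D$; combined with the variance-ordering hypothesis --- which again rules out the degenerate case that $D$ is essentially independent of $a$, since then $\mathbb{V}_{\pi_t}[A^\mu]\approx\mathbb{V}_{\pi_t}[A^{\pi_t}]$ --- this pins $\Exp_\rho\mathbb{V}_{\pi_t}[D]=\Theta(\eta^2)$. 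Chaining the three reductions yields $\Exp_{\bs\sim\rho}[V^{\pi_{t+1}}(\bs)-V^{\pi_t}(\bs)]\gsim\gamma\eta^2$, as claimed.

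I expect the main obstacle to be precisely this last step: cleanly separating the $\Theta(\eta)$-size $Q$-value discrepancy between $\mu$ and $\pi_t$ into the part that varies across $\pi_t$'s actions (the ``distinguishability'' that actually drives the improvement in Theorem~\ref{thm:policy-improvement}) and the part that is merely a state-dependent reparametrization of the value, and arguing that the ``complementary to $\pi_t$'' condition, in tandem with the variance-ordering hypothesis, keeps the former a constant fraction of the total. By comparison, the algebraic identity for the integrand, the two applications of Cauchy--Schwarz, and the passage between pointwise-in-$\bs$ and $\rho$-averaged quantities are routine.
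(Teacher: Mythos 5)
Your proposal is correct at the same (informal) level of rigor as the paper's own argument and follows the same high-level strategy --- feed the two hypotheses into Theorem~\ref{thm:policy-improvement}, use complementarity to force the distinguishability term up to $\eta^2$, and use the variance ordering to keep the alignment term from cancelling it --- but the execution is genuinely different. The paper never introduces $D=Q^\mu-Q^{\pi_t}$: it runs a triangle inequality $\E_{a\sim\pi_t}|Q^\mu-Q^{\pi_t}|\le \E_{a\sim\pi_t}|A^\mu|+|V^\mu-V^{\pi_t}|+\E_{a\sim\pi_t}|A^{\pi_t}|$ through the intermediate point $V^\mu(\bs)$, bounds each $\E|A|$ by the corresponding standard deviation via Cauchy--Schwarz, concludes $\sqrt{\E_{\bs\sim\rho}\mathbb{V}_{\pi_t}[A^\mu]}\gsim\eta$ from the $\Theta(\eta)$ complementarity together with the variance ordering, and then dismisses the misalignment term as $O(\eta^2)$ ``by a similar derivation.'' Your exact decomposition $\mathbb{V}[A^\mu]+\E[A^\mu A^{\pi_t}]=2\mathbb{V}[A^{\pi_t}]+3\Cov(A^{\pi_t},D)+\mathbb{V}[D]$ followed by Cauchy--Schwarz on the covariance is cleaner bookkeeping: it makes explicit that the two terms of the theorem must be controlled jointly, which the paper's separate bounds gloss over. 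Both arguments share the same soft spot, which you correctly flag: the hypotheses as stated (a bare strict inequality between the variances, and $\Theta(\eta)$ with unspecified constants) do not literally supply the constant-factor gap $\sigma_\pi\le\sigma_D/4$ (yours) or the large implied constant in $\Omega(\eta)$ (the paper's) that the conclusion needs; in the near-degenerate case where $D$ is almost constant in $a$ at each state, $\mathbb{V}[A^\mu]$ exceeds $\mathbb{V}[A^{\pi_t}]$ by an arbitrarily small margin while the improvement vanishes. One additional caution on your final reduction: $\E_{a\sim\pi_t}[D(\bs,a)]=\paren{V^\mu(\bs)-V^{\pi_t}(\bs)}+\E_{a\sim\pi_t}[A^\mu(\bs,a)]$, so the performance-gap assumption controls only the first summand and does not by itself place $\E_{\pi_t}D$ on the $\eta$ scale; the paper sidesteps this by working with second moments of $A^\mu$ rather than $\mathbb{V}_{\pi_t}[D]$, at the cost of the same imprecision appearing elsewhere in its chain.
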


\begin{proof}
 We begin by proving an upper bound on the disagreement between prover and base policy:
\begin{align*}
    \;\;\; &\E_{\bs\sim \rho}\E_{a\sim \pi_t}|Q^\mu(\bs, a) - Q^{\pi_t}(\bs, a)|  \\
     &\leq \E_{\bs\sim \rho}\E_{a\sim \pi_t}|Q^\mu(\bs, a) -V^\mu(\bs)| + \E_{\bs\sim \rho}\E_{a\sim \pi_t}|Q^{\pi_t}(\bs, a) -V^\mu(\bs)| \\
     &\leq \E_{\bs\sim \rho}\E_{a\sim \pi_t}|Q^\mu(\bs, a) -V^\mu(\bs)| + \E_{\bs\sim \rho}[ |V^\mu(\bs) - V^{\pi_t}(\bs)|] +  \E_{\bs\sim \rho}\E_{a\sim \pi_t}|Q^{\pi_t}(\bs, a) -V^{\pi_t}(\bs, a)|  \\
     &\leq \eta + \E_{\bs\sim \rho} \sqrt{\mathbb{V}_{\pi_t}[A^\mu(\bs, a)]} +  \E_{\bs\sim \rho} \sqrt{\mathbb{V}_{\pi_t}[A^{\pi_t}(\bs, a)]},
\end{align*}
where the last inequality uses Cauchy-Schwartz. Next we apply Jensen's inequality on the terms in the square root, and the conditions that $\E_{\bs \sim \rho}\E_{a\sim \pi_t}|Q^{\pi_t}(\bs, a) - Q^\mu(\bs_h, a)| = \Omega(\eta)$ to conclude:
\begin{align*}
      \sqrt{\E_{\bs\sim \rho} \mathbb{V}_{\pi_t}[A^\mu(\bs, a)]} \gsim \eta,
\end{align*}
whenever $\E_{\bs \sim \rho} \mathbb{V}_{a\sim \pi_t}[A^{\pi_t}(\bs, a)] < \E_{\bs \sim \rho} \mathbb{V}_{a\sim \pi_t}[A^\mu(\bs, a)]$. Now that we lower bound ``distinguishability'', it is easy to see that a similar derivation would upper bound the magnitude of the misalignment term by $O(\eta^2)$. Invoking the result in Theorem~\ref{thm:policy-improvement} yields:
\begin{align*}
    E_{\bs \sim \rho} [V^{\pi_{t+1}}(\bs)-V^{\pi_t}(\bs)] \gsim \eta^2
\end{align*}\end{proof}

\section{Examples Generated by Base Policy Trained on Rewards \texorpdfstring{$Q^\pi + \alpha Q^\mu$}{Qmu}}
\label{sec:q_value_failure}

When we train the base policy with reinforcement learning where the reward is $Q^\pi + \alpha Q^\mu$, instead of the effective reward $Q^\pi + \alpha A^\mu$, we find that the policy training converges to contrived solutions with degenerate failures. 

For example, we see in multiple examples (like the four below), that for every question, the model learns to output "REPHRASE THE PROBLEM" as the first step. 
This is because under $Q^\mu$ this trivial step achieves a positive reward. While there exists a better solution where the policy outputs the correct answer and achieves higher rewards, it fails to recover from this local optima and find the better solution. Across multiple training runs, we observed similar trends when training with $Q^\pi + \alpha Q^\mu$. 
Note that in the idealized update, where we observe advantages $A^\mu$ or value function $Q^\mu$ on all actions at a state (\textit{e.g.}, in the NPG update in \Secref{subsec:theory}),  using $Q^\pi + \alpha Q^\mu$ is equivalent to $Q^\pi + \alpha A^\mu$. But, in practice, $ A^\mu$ offers significant benefits in the form of variance reduction in policy gradient iterates. In this instance, it does that by not assigning reward to steps like ``REPHRASE THE PROBLEM'', since the likelihood of a prover solving before or after this step is expected to be roughly the same under any choice of the prover policy (\textit{i.e.}, the prover policy makes no progress).

\begin{tcolorbox}[colback=blue!5!white, colframe=blue!75!black, title=Example 1, breakable]
\scriptsize
\begin{verbatim}
Q: My club has 25 members.  In how many ways can I choose members to form a 4-person
executive committee?

A:

>>>
REPHRASE THE PROBLEM
WE NEED TO FIND THE NUMBER OF WAYS TO CHOOSE 4 MEMBERS OUT OF 25 FOR THE EXECUTIVE 
COMMITTEE SO WE CAN USE THE COMBINATION FORMULA WHICH IS 
N C R = N! / (R! * (N - R)!) WHERE N IS THE TOTAL NUMBER OF MEMBERS AND R IS 
THE NUMBER OF MEMBERS WE CHOOSE SO IN THIS CASE N = 25 AND R = 4 SO WE PLUG THEM IN AND 
SIMPLIFY 25 C 4 = 25! / (4! * (25 - 4)!) = (25 * 24 * 23 * 22) / (4 * 3 * 2 * 1) = 104005
# Answer

104005
\end{verbatim}
\end{tcolorbox}

\begin{tcolorbox}[colback=blue!5!white, colframe=blue!75!black, title=Example 2, breakable]
\scriptsize
\begin{verbatim}
Q: We have a triangle $\triangle ABC$ and a point $K$ on $BC$ such that $AK$ is
an altitude of $\triangle ABC$. If $AC = 10,$ $BK = 7$, and $BC = 13,$ then 
what is the area of $\triangle ABC$?

A:

>>>
REPHRASE THE PROBLEM
WE NEED TO FIND THE AREA OF $\triangle ABC$ SO WE CAN USE THE FORMULA $A = \frac{1}{2}bh$ 
WHERE $b$ IS THE BASE AND $h$ IS THE HEIGHT SO SINCE $AK$ IS AN ALTITUDE, THAT MEANS
$AK \perp BC$ SO WE CAN USE $BC$ AS THE BASE AND $AK$ AS THE HEIGHT SO NOW WE NEED 
TO FIND $AK$ SO WE CAN USE THE PYTHAGORE THEOREM ON $\triangle AKC$ SINCE $AC = 10$
# Answer

48
\end{verbatim}
\end{tcolorbox}

\begin{tcolorbox}[colback=blue!5!white, colframe=blue!75!black, title=Example 3, breakable]
\scriptsize
\begin{verbatim}
Q: Suppose that $n, n+1, n+2, n+3, n+4$ are five consecutive integers.

Determine a simplified expression for the sum of these five consecutive integers.

A:

>>>
REPHRASE THE PROBLEM
WE NEED TO FIND THE SUM OF THESE FIVE CONSECUTIVE INTEGERS SO WE CAN USE THE FORMULA 
$S_n = \frac{n}{2}(a_1 + a_n)$ WHERE $S_n$ IS THE SUM, $n$ IS THE NUMBER OF TERMS, 
$a_1$ IS THE FIRST TERM, AND $a_n$ IS THE LAST TERM SO IN THIS CASE $n = 5, a_1 = n, 
a_5 = n + 4$ 
# Answer

5n + 10
\end{verbatim}
\end{tcolorbox}

\begin{tcolorbox}[colback=blue!5!white, colframe=blue!75!black, title=Example 4, breakable]
\scriptsize
\begin{verbatim}

Q: What is the value of $x^2+y^2-z^2+2xy$ when $x=-3$, $y=5$, and $z=-4$?

A:

>>>
REPHRASE THE PROBLEM
WE NEED TO FIND THE VALUE OF $x^2+y^2-z^2+2xy$ WHEN $x=-3$, $y=5$, AND $z=-4$ 
SO WE JUST PLUG IN THE VALUES AND SIMPLIFY
# Answer

-2


\end{verbatim}
\end{tcolorbox}

\end{appendix}

\end{document}